%% file: main.tex
\documentclass[11pt]{article}

\usepackage[utf8]{inputenc}
\usepackage[T1]{fontenc}
\usepackage{lmodern}
\usepackage{amsmath,amssymb,amsthm,mathtools}
\usepackage{booktabs,tabularx,array,multirow}
\usepackage{graphicx}
\usepackage[margin=1in]{geometry}
\usepackage[round,authoryear]{natbib}
\usepackage{microtype}
\usepackage[hidelinks]{hyperref}
\usepackage{enumitem}
\usepackage{xcolor}
\usepackage{caption}
\usepackage{subcaption}
\usepackage{authblk}

\graphicspath{{../figures/}}
\newcommand{\alttext}[1]{%
  \par\smallskip
  \begingroup
  \footnotesize
  \raggedright
  \textit{Alt text:} #1\par
  \endgroup
}
\newcolumntype{Y}{>{\raggedright\arraybackslash}X}

\newtheorem{theorem}{Theorem}
\newtheorem{proposition}{Proposition}

\newtheorem{assumption}{Assumption}

\newcommand{\E}{\mathbb{E}}
\newcommand{\Pp}{\mathbb{P}}
\newcommand{\Pn}{\mathbb{P}_n}
\newcommand{\Ccal}{\mathcal{C}}

\newcommand{\psihat}{\widehat\psi}
\newcommand{\etahat}{\widehat\eta}
\DeclareMathOperator{\expit}{expit}

\title{Data-Poisoning Audits for Causal Effect Estimation}

\author[1]{Kwangho Kim}
\affil[1]{%
Department of Statistics, Korea University, Seoul, Republic of Korea\\
\href{mailto:kwanghk@korea.ac.kr}{\texttt{kwanghk@korea.ac.kr}}%
}
\date{}

\begin{document}
\maketitle

\begin{abstract}
Observational causal analyses increasingly pool records across sites,
vendors, and collection systems, creating vulnerability to append-only
attacks in which plausible records are strategically selected to alter a
reported treatment effect. We develop a data-poisoning audit for augmented
inverse-probability-weighted estimation. The analyst specifies a finite
catalog of feasible records, an append budget, and nested source
capacities, and the adversary selects a feasible subset to maximize
movement in a prespecified direction. With preprocessing and nuisance fits held fixed, we propose a greedy scan
that computes the exact finite-sample worst-case movement at every append
budget. To account for nuisance refitting, we go on to derive a
total-influence score combining each record's direct contribution with its
effect through the propensity and outcome models. We further obtain a conservative
finite-budget bound for the fully refitted estimate. Extensive simulations validate the exact result and show
that total influence improves local refit prediction, while multisite and
public-data analyses demonstrate material sensitivity at small append
budgets. By translating adversarial data-composition risk into movement curves and
critical budgets, the framework supports more reliable causal reporting
and the design of source-level safeguards.
\end{abstract}

\noindent\textbf{Keywords:} causal inference; data poisoning; average treatment effect; adversarial robustness; influence functions.

\section{Introduction}\label{sec:introduction}

Causal effect estimates from observational studies increasingly inform
drug safety, public policy, product design, and automated decision systems. The
underlying data are often pooled across hospitals, registries, contractors,
vendors, or brokers. In such settings, records may be contributed through
sources with heterogeneous governance and quality controls, and some
sources may be able to add strategically selected observations that pass
routine validation. Reliability therefore depends not only on
identification and model specification, but also on how far the composition
of the observed data can be altered under realistic contribution limits.

We use the term \emph{data-poisoning audit} to mean a prospective
assessment of this vulnerability, not a forensic test of whether poisoning
has already occurred. Given a fitted causal-analysis pipeline, a finite
catalog of plausible additional records, and source-level contribution
limits, the audit reports the largest upward or downward change attainable
at each append budget, the minimum budget required to cross a prespecified
decision threshold, and the extent to which nuisance-model refitting alters
that vulnerability. These quantities translate an abstract security concern
into an operational measure of how easily a scientific or policy conclusion
can be changed.

Yet familiar robustness guarantees in causal inference do not address
such strategic changes in sample composition. Double robustness is a consistency
property of estimators such as augmented inverse-probability weighting
(AIPW): for the average treatment effect, consistency is retained if either
the propensity model or the outcome regression is correctly specified
\citep{robins1994,bang2005,funk2011}. Neyman orthogonality, which underlies one-step, AIPW, and
double-machine-learning procedures, makes the estimating equation locally
insensitive to first-order perturbations of the nuisance functions
\citep{kennedy2024semiparametric}, while targeted procedures update the
initial nuisance fits so that the empirical efficient-influence-function
equation is approximately solved
\citep{vanderlaan2006,tsiatis2006}. These properties protect against model
or estimation error, not against strategically selected records.

Strategic vulnerability enters through the same score structure that
supports efficient estimation: inverse-propensity-weighted outcome
residuals can become large in regions of limited overlap. Thus, although
AIPW is protected against certain nuisance-model errors, it need not be
stable to adversarially selected records. This mechanism motivates the
proposed audit. Here, we do not claim novelty for the underlying instability
itself: robust statistics has long distinguished model robustness from
gross-error sensitivity \citep{huber1964,hampel1986}, and causal weighting
methods already address unstable weight tails
\citep{crump2009,imai2014,li2018,vanderlaan2025,hillchaudhuri2025}.

This distinction also clarifies how the proposed audit relates to conventional causal sensitivity analysis. Analyses for unmeasured confounding typically hold the observed sample fixed while relaxing the assumptions used to identify the causal effect \citep[e.g.,][]{rosenbaum2002,yadlowsky2022}. Our audit instead holds the identification model and adjustment set fixed while allowing feasible records to be appended subject to explicit source-capacity constraints. Its outputs quantify movement of the reported estimator rather than identification bounds for the population effect, and therefore do not replace conventional sensitivity analyses. A result may be insensitive to plausible unmeasured confounding yet vulnerable to a small number of admissible record additions, or resistant to such additions while remaining highly sensitive to violations of exchangeability. Consequently, both diagnostics may be necessary in high-stakes observational studies.

A useful audit must represent the data-contribution mechanism without making the optimization intractable. Unlimited repetition of one worst record is often implausible because duplicate checks, account restrictions, or source-level quotas limit repeated contributions. At the other extreme, an unrestricted continuous bilevel attack requires a difficult model of record plausibility and repeated nuisance refitting. We take an intermediate approach. The analyst specifies a finite catalog of records that would pass the relevant scientific and operational checks, together with nested capacity constraints: candidate alternatives may belong to a common profile, profiles to contributors, contributors to sites, and sites to a network. The resulting conclusions are conditional on this declared threat model, just as the conclusions of an unmeasured-confounding analysis depend on its chosen sensitivity model.

These constraints yield an exact and interpretable audit when
preprocessing and nuisance fits remain fixed at their reference-sample
values. In this setting, AIPW is an average of record-level scores, so
each candidate has a known directional gain relative to the reference
estimate. When the profile, contributor, site, and network groups are
nested, sorting candidates by gain and accepting each one that preserves
all capacities yields the exact worst-case append set at every feasible
cardinality. A single accepted ordering therefore determines the full
movement curve and the minimum append budget required to cross a decision
threshold. Disjoint source partitions are a special case.

In practice, deployed pipelines commonly refit their propensity and outcome
models after records are appended, so exact fixed-pipeline movement must be
distinguished from realized movement after refitting. For
finite-dimensional penalized nuisance models, we derive a per-record total
influence that combines the direct AIPW-score contribution with the
indirect effect of nuisance refitting. It supports candidate ranking after
a single linear-system solve, rather than one complete refit per candidate.
For capped inverse weights and stable ridge nuisance fits, we further
derive a conservative finite-budget bound for the fully refitted estimate.
Total influence provides a practical first-order diagnostic, whereas the
bound gives a finite-budget guarantee under the stated conditions;
separating them avoids presenting a local approximation as an exact refit
result.

Figure~\ref{fig:running-example} illustrates the proposed audit using a
simple two-site data-generating process, where \(S\in\{1,2\}\) indexes the
source site and \(A\in\{0,1\}\) is the binary treatment:
\[
\begin{aligned}
&\Pr(S=s')=\tfrac12,\quad
X\mid S=s'\sim N(1.2s'-1.8,1),
\quad s'\in\{1,2\},\\
&A\mid X\sim\operatorname{Bernoulli}\{\operatorname{expit}(2.5X)\},
\quad
Y=0.5A+X+\varepsilon,\quad \varepsilon\sim N(0,1).
\end{aligned}
\]
The site indicator records only the source of each row, while \(X\) is the
sole adjustment covariate, and the true treatment effect is \(0.5\).
Because treatment is uncommon at small \(X\), a treated record in that
region can receive a large inverse-propensity weight and, if its outcome is
low, pull the estimate downward. A rare untreated record with large \(X\)
and a high outcome can have a similar effect. We illustrate a downward
audit because the original estimate is positive; the same procedure
applies in the opposite direction.

Panel~A highlights the two candidate patterns with the greatest downward
impact. Panel~B ranks candidates by their predicted downward effect. Multiple alternatives may correspond to
the same profile, but at most one may be selected, and site and overall
capacities impose further limits; hatched candidates are skipped because
they would violate a capacity. Panel~C appends the selected records
cumulatively and compares the exact fixed-pipeline change, a local
approximation under nuisance refitting, and a complete reanalysis. The
first two become negative after two additions, whereas the complete
reanalysis remains positive until the fourth, illustrating the distinction
between fixed-pipeline movement and realized movement after refitting.

\begin{figure}[t]
\centering
\includegraphics[width=\textwidth]{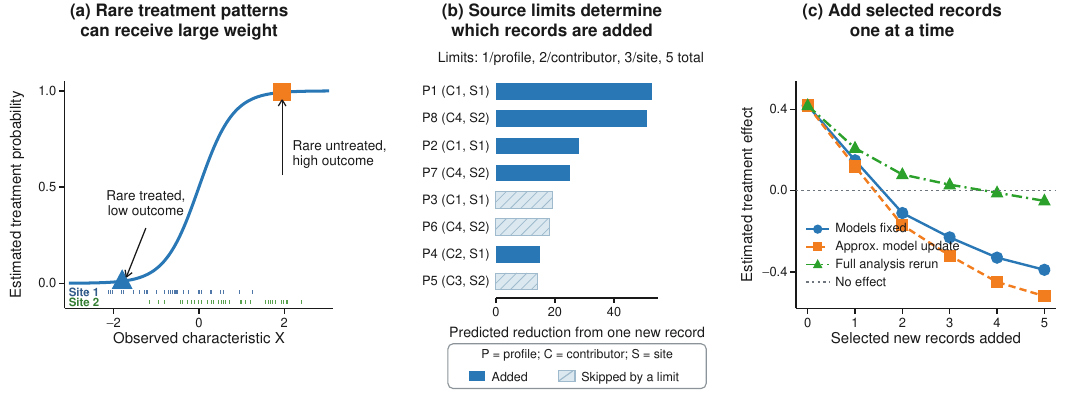}
\caption{
Running example of a downward data-poisoning audit.
\textbf{A.} Rare treatment patterns can give plausible new records
large inverse-propensity weight.
\textbf{B.} Candidates are ranked by predicted reduction under the fixed
pipeline. Capacities specify the maximum number selectable from each
group: one per profile, two per contributor, three per site, and five
overall. Hatched bars mark candidates skipped because a capacity has
already been reached.
\textbf{C.} The selected records are added cumulatively, comparing the
exact fixed-model calculation, a local model-update approximation, and
a complete reanalysis.
}
\alttext{Three-panel illustration of a downward audit. Panel A shows rare
treated and untreated profiles with large inverse weights. Panel B ranks
candidate records and marks those accepted or skipped under nested source
capacities. Panel C compares fixed-pipeline, first-order refit, and fully
refitted treatment-effect estimates as selected records are appended.}
\label{fig:running-example}
\end{figure}

The running example illustrates the three quantities developed in the
paper: exact movement under a fixed pipeline, a first-order approximation
under nuisance refitting, and realized movement after complete reanalysis.
Building on these quantities, the paper makes four contributions. First,
we formulate a hierarchical source-capacity threat model and derive the exact fixed-pipeline AIPW movement curve at every feasible budget. Second, we develop a total-influence diagnostic that
accounts for nuisance refitting, together with a conservative finite-budget
bound for capped inverse weights and stable ridge nuisance fits. Third, we
provide a decision-based rule for choosing the inverse-weight cap,
re-ranking candidates for each cap rather than transferring an attack
optimized for a different estimator. Fourth, we evaluate the audit across
overlap and nuisance-misspecification regimes, under multilevel contributor
and site capacities, on ten Atlantic Causal Inference Conference 2016
instances, and in a smoking-cessation analysis.

The audit is intended to accompany, rather than replace, ordinary causal
diagnostics. Its interpretation requires the analyst to report the
estimand, fitted pipeline, feasible record catalog, source capacities,
and whether the movement is computed for a fixed pipeline, approximated
under refitting, or obtained from a complete reanalysis. A poisoning budget
has no stable meaning without these specifications.

\subsection{Relationship to existing work}

Our contribution intersects robust causal estimation, causal sensitivity analysis, and training-data security. Outlier-resistant causal methods use density-power weighting \citep{harada2024}, bounded-influence estimating equations and balancing propensity scores \citep{lee2025}, high-dimensional covariate balance \citep{ning2020}, or inverse-probability-tail trimming \citep{hillchaudhuri2025}. Other approaches perturb target populations or nuisance laws \citep{jeong2020,tanimoto2025}, modify measurement-error models \citep{kallus2018,guo2022}, or bound failures of propensity-score and positivity assumptions \citep{csillag2024}. These methods address random contamination, distributional uncertainty, or violations of causal assumptions; they do not optimize directionally chosen record additions against a fitted treatment-effect pipeline. Our audit conditions on an adjustment set and evaluates uncertainty in data composition, so an applied analysis may require both ordinary identification sensitivity analyses and the proposed poisoning audit.

Methods for stabilizing overlap are particularly relevant because they control the same inverse weights that create the attack surface studied here. Trimming changes the target population \citep{crump2009}; overlap weights emphasize units near treatment equipoise \citep{li2018}; balancing and isotonic calibration stabilize fitted weights \citep{imai2014,vanderlaan2025}; and \citet{desai2019} connect estimation-error bounds with clipping. Our target is different: the largest directional movement achievable through feasible, capacity-constrained record additions. We therefore evaluate inverse-weight capping jointly in terms of clean-data performance and worst-case audited movement, without claiming that capping provides universal protection.

Data-poisoning attacks are commonly formulated as constrained or bilevel
optimization problems \citep{biggio2012,mei2015,jagielski2018}. Influence
functions provide tractable local approximations to training-point effects
\citep{koh2017}, while certified defenses bound changes induced by training
data perturbations \citep{steinhardt2017}. Most closely, Abstract Gradient
Training (AGT) propagates sound bounds through first-order training procedures
to enclose parameters reachable after additions, deletions, or bounded
perturbations \citep{sosnin2026}. Our audit is narrower but exact for a fixed scalar AIPW estimator under a finite feasible-record catalog and hierarchical source capacities; its
treatment of nuisance refitting is local rather than a global
parameter-space certificate.

We do not claim the first adversarial manipulation of a causal estimand.
Prior work poisons off-policy policy-value estimators \citep{lobo2022},
optimizes treatment assignments in randomized trials \citep{babaei2024},
attacks causal procedures through missingness
\citep{koyuncu2023,koyuncu2026}, and steers Bayesian posteriors through
deletion or replication \citep{carreau2025}. To our knowledge, prior work
has not combined a finite append-only record catalog and nested source
capacities with exact fixed-pipeline AIPW movement curves, refit-aware
influence ranking, and a finite-budget bound for the fully refitted capped
estimator.

\begin{table}[t]
\centering
\small
\caption{Positioning relative to adjacent methodological areas.}
\label{tab:positioning}
\begin{tabularx}{\textwidth}{
    p{0.22\textwidth}
    p{0.22\textwidth}
    p{0.23\textwidth}
    Y
}
\toprule
Research line
& Perturbation or concern
& Primary target
& Difference in focus \\
\midrule

Robust causal estimation
\citep{harada2024,lee2025,hillchaudhuri2025}
& Outliers, contamination, or heavy-tailed weighted outcomes
& Robust estimation and inference
& Strategically selected append sets, directional estimator movement,
and explicit source capacities \\

Overlap and weight stabilization
\citep{crump2009,li2018,vanderlaan2025}
& Limited overlap and extreme inverse weights
& Variance reduction, covariate balance, or alternative target populations
& Audits the adversarial exposure created by extreme weights and compares
caps under cap-specific re-optimized attacks \\

Causal-adversarial methods
\citep{lobo2022,babaei2024,koyuncu2026}
& Policy-data perturbations, treatment assignments, or missingness patterns
& Policy-value or treatment-effect error
& Append-only observational AIPW with a feasible-record catalog,
nested source capacities, and nuisance refitting \\

General poisoning certificates and AGT
\citep{steinhardt2017,sosnin2026}
& Training-data additions, deletions, replacements, or bounded perturbations
& Predictive loss or reachable model parameters
& Exact finite-sample movement for a fixed scalar AIPW pipeline, with a
first-order refit diagnostic and a conservative finite-budget refit bound \\
\bottomrule
\end{tabularx}
\end{table}

\section{Methods}\label{sec:methods}

\subsection{Audited causal pipeline}\label{sec:pipeline}

Let \(O=(X,A,Y)\), where \(A\in\{0,1\}\) is a binary treatment,
\(X\) contains pretreatment covariates, and \(Y\) is an outcome. The
reference sample is \(\mathcal D_n=\{O_i\}_{i=1}^n\). Under consistency,
conditional exchangeability, and positivity
\citep[][Chapter~12]{imbens2015causal}, the average treatment effect (ATE)
\[
\psi=\E\{Y(1)-Y(0)\}
\]
is identified as
\[
\psi=\E\{\mu_1(X)-\mu_0(X)\},
\]
where
\[
e(x)=\Pp(A=1\mid X=x),
\quad
\mu_a(x)=\E(Y\mid A=a,X=x).
\]
The proposed audit does not weaken these identification assumptions. It
evaluates the sensitivity of a specified estimator after the target
population and adjustment set have been chosen.

For a nuisance collection \(\eta=(e,\mu_0,\mu_1)\), define the
uncentered AIPW score
\begin{equation}\label{eq:aipw-score}
\varphi(O;\eta)
=
\mu_1(X)-\mu_0(X)
+\frac{A}{e(X)}\{Y-\mu_1(X)\}
-\frac{1-A}{1-e(X)}\{Y-\mu_0(X)\}.
\end{equation}
At the population nuisance functions, the centered AIPW score
\[
\phi(O;\eta)
=
\varphi(O;\eta)-\psi
\]
is the efficient influence function (EIF) for the ATE under the
nonparametric model. We take as the reference estimate the AIPW estimator
\[
\psihat
=
\Pn\{\varphi(O;\etahat)\},
\]
computed from the original sample \(\mathcal D_n\). Its empirically centered fitted score is
\[
\widehat\phi(O)
=
\varphi(O;\widehat\eta)-\widehat\psi.
\]

For implementation and refit analysis, we estimate the nuisance functions
with finite-dimensional working models: ridge logistic regression for
treatment and arm-specific ridge linear regressions for the outcome. These
models restrict nuisance estimation, not the nonparametric model defining
the estimand and EIF, and make the refit map transparent.

Let
\[
\theta=(\beta^\top,\gamma_0^\top,\gamma_1^\top)^\top
\]
collect the coefficients of the propensity and outcome working models. For notational convenience, we write
\[
\eta_\theta
=
(e_\beta,\mu_{0,\gamma_0},\mu_{1,\gamma_1}),
\quad
\varphi_\theta(O)
=
\varphi(O;\eta_\theta).
\]
At the fitted coefficients \(\widehat\theta\), we correspondingly write
\[
\widehat\eta=\eta_{\widehat\theta},
\quad
\varphi_{\widehat\theta}(O)
=
\varphi(O;\widehat\eta).
\]

The primary audit is developed for the ordinary AIPW estimator. We later
consider inverse-weight capping as a tunable intervention that enables a
finite refit safety bound and trades clean-data precision against audited
poisoning exposure.

The preprocessing map, outcome range, nuisance-model penalties, and
catalog rules are fixed using the reference sample. When inverse-weight
capping is used, its threshold is fixed as well. Appended records do not
alter standardization or validation ranges. This represents a versioned
deployment and prevents the audit from conflating poisoning of
preprocessing with poisoning of nuisance estimation.

\subsection{Feasible catalog and nested source capacities}
\label{sec:threat}

A white-box adversary knows the fitted analysis pipeline and may append at
most \(m\) records selected from a finite catalog
\[
\Ccal=\{z_j=(x_j,a_j,y_j):j=1,\ldots,J\},
\]
whose entries satisfy the relevant scientific and operational
plausibility requirements.

The candidates are organized into nested source groups: alternative
records for one profile form a profile group, profiles are nested within
contributors, contributors within sites, and sites within a network. Let
\(\mathcal L\) denote this collection. Any two groups in \(\mathcal L\)
are either disjoint or one is contained in the other. Each
\(L\in\mathcal L\) has a capacity \(c_L\), the maximum number of selected
records allowed from that group. For example, a profile capacity of one
permits only one alternative, while higher-level capacities limit
concentration within contributors, sites, or the network.

An index set \(I\subseteq\{1,\ldots,J\}\) is feasible if
\begin{equation}\label{eq:capacity}
|I|=k\le m,
\quad
\big|\{j\in I:z_j\in L\}\big|\le c_L
\quad\text{for every }L\in\mathcal L.
\end{equation}
Throughout, \(m\) is taken not to exceed the maximum cardinality allowed
by the capacity constraints; any larger declared budget is equivalent to
that maximum.

Thus, the same candidate may be subject simultaneously to profile-,
contributor-, site-, and network-level limits. Disjoint source groups form
a simple partition. By contrast, constraints that overlap without being
nested, such as those involving contributors operating across multiple
sites and time windows, require a more general optimization method; the
greedy result below does not apply.

The catalog should reflect the application. It may contain observed
covariate profiles, bounded perturbations of observed records, or records
generated by a validated model. A profile capacity can prevent several
mutually exclusive versions of the same individual from being selected,
while contributor and site capacities limit concentration within a single
source.

For a prespecified direction \(s\in\{-1,+1\}\), let \(s=-1\) denote a
downward audit and \(s=+1\) an upward audit. Conditional on \(s\), the
audit optimizes over feasible appended records. The fixed-pipeline gain of
candidate \(j\) is
\begin{equation}\label{eq:gain}
d_j(s)
=
s\,\widehat\phi(z_j)
=
s\{\varphi(z_j;\etahat)-\psihat\}.
\end{equation}
Unless one direction is dictated by the decision problem, the audit is
run separately for both values of \(s\). A larger \(d_j(s)\) indicates
greater movement in direction \(s\) with the fitted nuisance functions
held fixed. Candidates are scanned in decreasing order of \(d_j(s)\) and
accepted whenever doing so preserves all capacities. Let
\[
\pi_{s,1},\pi_{s,2},\ldots
\]
denote the accepted order, as illustrated in panel~B of
Figure~\ref{fig:running-example}.

\paragraph{Specification and interpretation.}
The catalog \(\Ccal\), source hierarchy \(\mathcal L\), capacities
\(\{c_L:L\in\mathcal L\}\), and maximum append budget \(m\) should be
fixed before inspecting the candidate rankings. They should encode the
records and contribution patterns that are scientifically and
operationally plausible in the application. Useful specifications include
an observed-profile catalog, a local-plausibility catalog allowing
bounded perturbations, and a source-governance hierarchy imposing
contributor and site limits. Because the audit is conditional on these
choices, results should be compared across plausible catalog and
capacity specifications.

The audit measures prospective vulnerability rather than detecting whether
poisoning has occurred. Large movement under a small append budget
indicates that the reported estimate is highly sensitive to plausible
changes in data composition, whereas small movement over a broad range of
budgets indicates greater stability under the specified threat model.
Neither finding establishes whether the data were actually manipulated.
Any assurance is conditional on the stated catalog, capacities, and
append-only setting; omitted record classes, changes to preprocessing, and
replacement attacks remain outside the guarantee.

\subsection{Exact audit under a fixed pipeline}
\label{sec:fixed}

We call the pipeline \emph{fixed} when its preprocessing and fitted
nuisance functions remain at their reference-sample values. For a
feasible set \(I\) of \(k\) appended records, the resulting estimate is
\[
\psihat_{\mathrm{fix}}^{\,I}
=
\frac{
n\psihat+\sum_{j\in I}\varphi(z_j;\etahat)
}{n+k},
\]
and its movement in direction \(s\in\{-1,+1\}\) is
\begin{equation}\label{eq:fixed-movement}
s\{\psihat_{\mathrm{fix}}^{\,I}-\psihat\}
=
\frac{\sum_{j\in I}d_j(s)}{n+k}.
\end{equation}
For fixed \(s\) and \(k\), the adversary therefore selects the feasible
\(k\)-record set with the largest total gain
\(\sum_{j\in I}d_j(s)\). The following theorem shows that, under nested
source capacities, this optimum is obtained by scanning candidates in
decreasing order of \(d_j(s)\) and accepting each candidate that preserves
feasibility.

\begin{theorem}
\label{thm:exact}
Fix \(s\in\{-1,+1\}\). For each feasible \(k\), the first \(k\)
accepted candidates maximize \eqref{eq:fixed-movement} over all feasible
\(k\)-record append sets. The corresponding worst-case movement is
\begin{equation}\label{eq:exact-k}
B^{\mathrm{fix}}_s(k)
=
\frac{1}{n+k}
\sum_{r=1}^{k}d_{\pi_{s,r}}(s),
\quad
B^{\mathrm{fix}}_s(0)=0.
\end{equation}
With a budget of at most \(m\) records, the worst-case movement is
\[
A^{\mathrm{fix}}_s(m)
=
\max_{0\le k\le m}B^{\mathrm{fix}}_s(k),
\]
and the accepted prefix attaining this maximum is optimal.
\end{theorem}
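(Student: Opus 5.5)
The plan is to recognize the feasible sets in \eqref{eq:capacity} as the independent sets of a laminar matroid, and then apply the classical fact that the matroid greedy algorithm produces a maximum-weight basis of every rank simultaneously.

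\emph{Step 1: matroid structure.} Fix \(s\). By \eqref{eq:fixed-movement}, for each \(k\) the denominator \(n+k\) is constant across all \(k\)-sets. Maximizing the movement over feasible \(k\)-sets is therefore equivalent to maximizing \(\sum_{j\in I}d_j(s)\).

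Let \(\mathcal I\) be the family of index sets satisfying all group capacities; the cardinality cap \(m\) is the network-level group, or a truncation. I will verify the matroid axioms for \(\mathcal I\).
\begin{itemize}
\item Heredity is immediate.
\item For the exchange axiom, take \(I,I'\in\mathcal I\) with \(|I'|>|I|\). Call a group \(L\) \emph{tight} if \(|I\cap L|=c_L\). Suppose every \(j\in I'\setminus I\) lies in some tight group. Because the groups are laminar, the maximal tight groups are pairwise disjoint. Every element of \(I'\setminus I\) lies in one of them, and each maximal tight group \(T\) satisfies \(|I'\cap T|\le c_T=|I\cap T|\). Summing over these groups and adding the elements of \(I\cap I'\) outside them gives \(|I'|\le|I|\), a contradiction. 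Hence some \(j\in I'\setminus I\) can be added to \(I\) without violating any capacity.
\end{itemize}
This disjointness of maximal tight groups is the only place nestedness enters, and it is the main point to write carefully. The counting needs care: elements of \(I'\) outside all maximal tight groups must belong to \(I\), since otherwise they could be added directly.

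\emph{Step 2: greedy optimality at every cardinality.} Let \(G_k=\{\pi_{s,1},\ldots,\pi_{s,k}\}\) be the first \(k\) accepted candidates. Rather than citing Rado–Edmonds, I will prove directly that \(G_k\) maximizes weight among feasible \(k\)-sets, since the result is needed for every \(k\) and not only for bases.

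Take any feasible \(I\) with \(|I|=k\), and sort its elements \(i_1,\ldots,i_k\) by decreasing gain. I claim \(d_{\pi_{s,r}}(s)\ge d_{i_r}(s)\) for each \(r\). Suppose instead \(d_{i_r}>d_{\pi_{s,r}}\) for some \(r\). Apply the exchange axiom to \(G_{r-1}\) and \(\{i_1,\ldots,i_r\}\). It yields some \(i_t\), \(t\le r\), with \(G_{r-1}\cup\{i_t\}\) feasible and \(d_{i_t}\ge d_{i_r}>d_{\pi_{s,r}}\). That element was scanned before \(\pi_{s,r}\), when the accepted set was a subset of \(G_{r-1}\). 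By heredity it was feasible to add then, so greedy would have accepted it, a contradiction.

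Ties can be handled by any fixed tie-breaking order. Summing the inequalities \(d_{\pi_{s,r}}(s)\ge d_{i_r}(s)\) gives \eqref{eq:exact-k}. The same exchange argument also shows that greedy reaches every feasible cardinality, because greedy produces a basis and all bases share the rank.

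\emph{Step 3: the budget version.} For any budget \(m\), the optimum over all feasible sets with \(|I|\le m\) is the maximum over \(k\le m\) of the optimum at each fixed \(k\). By Step 2, each of these is \(B^{\mathrm{fix}}_s(k)\). The maximizing prefix \(G_{k^\*}\) is feasible, so it attains \(A^{\mathrm{fix}}_s(m)\), and \(k=0\) covers the case where every gain is negative.

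I expect the laminar exchange argument in Step 1 to be the main obstacle. The rest is standard greedy and matroid reasoning.
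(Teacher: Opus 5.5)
Your proof is correct, but it takes a different route from the paper's. The paper never proves an augmentation property for arbitrary pairs of feasible sets. Instead, it fixes $k$ and shows by induction on $r$ that some maximum-gain feasible $k$-set $S$ contains the greedy prefix $G_r$. When the next accepted candidate $e=\pi_{s,r}$ is missing from $S$, it swaps out some $j\in S\setminus G_{r-1}$. If $S\cup\{e\}$ is feasible, $j$ is arbitrary; otherwise $j$ is taken inside the inclusion-minimal group $L_\ast\ni e$ whose capacity $S$ already attains. Nestedness enters only in checking that any group $K$ containing $e$ but not $j$ satisfies $K\subsetneq L_\ast$, so $K$ is not tight. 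The inequality $d_e\ge d_j$ holds because $G_{r-1}\cup\{j\}\subseteq S$ would otherwise have let the scan accept $j$ earlier. You instead isolate nestedness in one lemma (maximal tight groups are disjoint, which gives augmentation) and then run the Rado--Edmonds/Gale argument. That argument yields componentwise dominance $d_{\pi_{s,r}}(s)\ge d_{i_r}(s)$ for every $r$, which is stronger than optimality of the sum. Your route is modular: it shows the result holds for any capacity structure whose feasible sets form a matroid. It also explains why cross-classified capacities, which generally give only an intersection of laminar matroids, fall outside the greedy result. Your basis/rank remark explicitly verifies that the scan accepts exactly $R$ candidates, which the paper's proof takes for granted when it indexes the accepted order up to $R$. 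The paper's route is more elementary and self-contained: it needs only one local swap per step against a single optimal set, not a global exchange property.
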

Here, \(B^{\mathrm{fix}}_s(k)\) is the largest movement achievable with
exactly \(k\) appended records, whereas \(A^{\mathrm{fix}}_s(m)\) allows
the adversary to use any number up to \(m\). The maximum over \(k\) is
needed because appending another record changes both the total gain and
the normalizing denominator \(n+k\); consequently, the strongest attack
may use fewer than \(m\) records.

The nested source capacities give the feasible sets the greedy structure
required by Theorem~\ref{thm:exact}. If each candidate belongs to at most
\(H_{\mathcal L}\) nested groups, one sorting step and one feasibility scan
compute the full curve in
\(O(J\log J+JH_{\mathcal L})\) time. Upward and downward audits require
separate scans. A proof of Theorem~\ref{thm:exact} is given in
Appendix~\ref{sec:supp-proof-exact}, and numerical verification is
reported in Appendix~\ref{sec:supp-validation}.

At budget \(m\), the range of estimates attainable under the fixed
pipeline is
\begin{equation}\label{eq:sensitivity-band}
\left[
\psihat-A^{\mathrm{fix}}_{-1}(m),\,
\psihat+A^{\mathrm{fix}}_{+1}(m)
\right].
\end{equation}
This is a deterministic movement range conditional on the catalog and
capacities, not a confidence interval.

Let \(D\ge0\) denote the movement required to cross a prespecified
decision threshold. The corresponding decision-instability budget is
\begin{equation}\label{eq:critical}
m_s^\star(D)
=
\min\{m\ge0:A_s^{\mathrm{fix}}(m)\ge D\},
\end{equation}
with \(m_s^\star(D)=\infty\) if the threshold cannot be reached. For a
positive estimate, taking \(s=-1\) and \(D=\psihat\) gives the minimum
budget required to reverse its sign. Taking
\[
D
=
\max\left\{
0,\,
\psihat-z_{0.975}\widehat{\mathrm{se}}(\psihat)
\right\}
\]
gives the minimum downward budget required to move the lower endpoint of
the ordinary Wald interval to zero while holding its standard error
fixed. If the original lower endpoint is already nonpositive, then
\(D=0\) and the corresponding decision-instability budget is zero. The
standard error is estimated from the reference sample and held fixed, so
this calculation captures instability arising from movement of the point
estimate but not changes in uncertainty after records are appended. It is
therefore a decision-sensitivity calculation, not a coverage guarantee
under poisoning.

\subsection{Influence under nuisance refitting}
\label{sec:total-if}

The exact audit above keeps the preprocessing and fitted nuisance functions
fixed. In practice, however, the nuisance models would typically be
refitted after records are appended. Here, we derive a first-order
approximation to the change in the AIPW estimate that accounts for this
refitting.

Let \(U(O;\theta)\) denote the penalized estimating function, including
the ridge contribution, with \(\widehat\theta\) satisfying
\begin{equation}\label{eq:estimating-equation}
\Pn U(O;\widehat\theta)=0.
\end{equation}
Define
\[
H
=
-\Pn\partial_\theta U(O;\widehat\theta),
\quad
\bar q
=
\Pn\partial_\theta\varphi_{\widehat\theta}(O).
\]

For a candidate record \(z\), consider the smooth contamination path
\[
P_t=(1-t)\Pn+t\delta_z,
\]
where \(\delta_z\) is the Dirac measure at \(z\), and let \(\theta(t)\)
satisfy
\[
P_tU\{O;\theta(t)\}=0.
\]
Because \(\theta(0)=\widehat\theta\), implicit differentiation at \(t=0\)
gives, when \(H\) is nonsingular,
\[
\dot\theta(0)
=
H^{-1}U(z;\widehat\theta).
\]
Differentiating the corresponding refitted AIPW estimate
\(P_t\varphi_{\theta(t)}\) at \(t=0\) then gives the total influence of an appended record
\(z\).

\begin{proposition}
\label{prop:total-if}
Suppose \(U(O;\theta)\) and \(\varphi_\theta(O)\) are continuously
differentiable with respect to \(\theta\) in a neighborhood of
\(\widehat\theta\), and \(H\) is nonsingular. Then
\begin{equation}\label{eq:total-if}
\left.
\frac{\mathrm d}{\mathrm dt}
P_t\varphi_{\theta(t)}
\right|_{t=0}
=
\operatorname{IF}_{\mathrm{tot}}(z)
=
\underbrace{\widehat\phi(z)}_{\text{direct contribution}}
+
\underbrace{
\bar q^{\!\top}H^{-1}U(z;\widehat\theta)
}_{\text{nuisance-refit contribution}}.
\end{equation}
\end{proposition}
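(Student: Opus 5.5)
We prove the identity in two steps: first we justify the local refit path \(\theta(t)\) with the implicit function theorem, and then we differentiate \(P_t\varphi_{\theta(t)}\) with the product and chain rules.

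\textbf{Step 1 (the refit path).} Define
\[
G(t,\theta)=P_tU(O;\theta)=(1-t)\Pn U(O;\theta)+tU(z;\theta),
\]
a finite sum. Because \(U\) is continuously differentiable in \(\theta\) near \(\widehat\theta\), the map \(G\) is \(C^1\) jointly in \((t,\theta)\), and it is affine in \(t\). We have \(G(0,\widehat\theta)=0\) by \eqref{eq:estimating-equation}, and \(\partial_\theta G(0,\widehat\theta)=\Pn\partial_\theta U(O;\widehat\theta)=-H\), which is nonsingular. The implicit function theorem then gives a unique \(C^1\) solution \(\theta(t)\) on a neighborhood of \(t=0\), with \(\theta(0)=\widehat\theta\). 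Its derivative is
\[
\dot\theta(0)=-\{\partial_\theta G\}^{-1}\partial_tG=H^{-1}\{U(z;\widehat\theta)-\Pn U(O;\widehat\theta)\}=H^{-1}U(z;\widehat\theta),
\]
where the last equality uses \eqref{eq:estimating-equation} again. This rigorously confirms the formula displayed before the proposition.

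\textbf{Step 2 (differentiating the estimate).} Write
\[
F(t)=P_t\varphi_{\theta(t)}=(1-t)\Pn\varphi_{\theta(t)}+t\,\varphi_{\theta(t)}(z).
\]
Since \(\varphi_\theta\) is \(C^1\) in \(\theta\) and \(\theta(t)\) is \(C^1\), both terms are differentiable. The product rule gives
\[
F'(0)=-\Pn\varphi_{\widehat\theta}+\varphi_{\widehat\theta}(z)+\Pn\{\partial_\theta\varphi_{\widehat\theta}(O)\}^{\!\top}\dot\theta(0).
\]
In this expression, the term \(t\,\partial_t\varphi_{\theta(t)}(z)\) vanishes at \(t=0\). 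Interchanging differentiation with \(\Pn\) is trivial because \(\Pn\) is a finite average. Now \(\varphi_{\widehat\theta}(z)-\Pn\varphi_{\widehat\theta}=\varphi(z;\etahat)-\psihat=\widehat\phi(z)\). Substituting \(\dot\theta(0)\) from Step 1 turns the last term into \(\bar q^{\top}H^{-1}U(z;\widehat\theta)\), which yields \eqref{eq:total-if}.

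\textbf{Main obstacle.} The only delicate step is the existence and differentiability of \(\theta(t)\): the proposition's statement presupposes the path, and the implicit function theorem must be checked at \((0,\widehat\theta)\). The nonsingularity of \(H\) is exactly the needed hypothesis. The smoothness of \(\varphi_\theta\) also requires the fitted propensity to stay bounded away from 0 and 1 near \(\widehat\theta\). This holds automatically for a logistic working model with finite coefficients, which we note explicitly. Everything else is bookkeeping, including the sign convention \(H=-\Pn\partial_\theta U\).
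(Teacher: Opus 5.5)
Your proposal is correct and follows essentially the same route as the paper: apply the implicit function theorem to \(P_tU(O;\theta)\) at \((0,\widehat\theta)\), using nonsingularity of \(H\), then differentiate \(P_t\varphi_{\theta(t)}\) with the product and chain rules, using \(\Pn U(O;\widehat\theta)=0\) and \(\Pn\varphi_{\widehat\theta}=\psihat\) to obtain the direct and refit terms. The only cosmetic difference is that you read \(\dot\theta(0)\) off the implicit-function-theorem derivative formula, whereas the paper differentiates the identity \(P_tU\{O;\theta(t)\}=0\) directly; these are the same computation.
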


The first term is the direct fixed-pipeline contribution, whereas the
second captures the effect of nuisance-model refitting. In particular,
\[
P_t\varphi_{\theta(t)}-\widehat\psi
=
t\,\operatorname{IF}_{\mathrm{tot}}(z)+o(t)
\qquad
\text{as }t\to0.
\]
Because appending one copy of \(z\) corresponds to a contamination
fraction of \(1/(n+1)\), we use
\[
\frac{\operatorname{IF}_{\mathrm{tot}}(z)}{n+1}
\]
as the first-order prediction of the resulting refitted movement. Computationally, one solves
\[
H^\top v=\bar q
\]
once and then evaluates
\[
\operatorname{IF}_{\mathrm{tot}}(z)
=
\widehat\phi(z)+v^\top U(z;\widehat\theta)
\]
for each candidate. The derivation is given in
Appendix~\ref{sec:supp-proof-total-if}, and the corresponding expansion
for multiple appended records appears in
Appendix~\ref{sec:supp-multiple-records}.

For a feasible set \(I\) of \(k\) appended records, the first-order
directional movement is
\[
\frac{1}{n+k}
\sum_{j\in I}
s\,\operatorname{IF}_{\mathrm{tot}}(z_j).
\]
Accordingly, we define
\[
d_j^{\mathrm{tot}}(s)
=
s\,\operatorname{IF}_{\mathrm{tot}}(z_j).
\]
Because this first-order objective is additive, applying the same
capacity-constrained scan to \(d_j^{\mathrm{tot}}(s)\) gives the accepted
order
\[
\pi^{\mathrm{tot}}_{s,1},
\pi^{\mathrm{tot}}_{s,2},
\ldots.
\]
The corresponding local curves are
\begin{align}
B_s^{\mathrm{tot}}(k)
&=
\frac{1}{n+k}
\sum_{r=1}^{k}
d_{\pi^{\mathrm{tot}}_{s,r}}^{\mathrm{tot}}(s),
\quad
B_s^{\mathrm{tot}}(0)=0,
\label{eq:total-curve}\\
A_s^{\mathrm{tot}}(m)
&=
\max_{0\le k\le m}
B_s^{\mathrm{tot}}(k).
\label{eq:total-at-most}
\end{align}

These curves are first-order diagnostics under nuisance refitting, rather
than exact finite-budget guarantees. As the appended mass increases,
nuisance-model curvature, candidate rankings, and the optimal selected set
may change. We therefore use the curves to select candidate records and
then rerun the prescribed pipeline, retaining the fixed preprocessing
while refitting the nuisance models, to report the realized stress-test
movement.

The direct and total-influence rankings need not agree. A record with a
large direct contribution may be partly absorbed by nuisance refitting,
whereas a record with smaller direct leverage may strongly perturb an
ill-conditioned nuisance-model direction and thereby affect predictions
across many reference observations. The overlap between the selected sets
therefore helps determine whether vulnerability is driven primarily by
direct score leverage or by nuisance-model refitting.

\subsection{Finite-budget bound under nuisance refitting}
\label{sec:safety}

The fixed-pipeline audit in Section~\ref{sec:fixed} gives an exact
finite-budget result but does not account for nuisance-model refitting,
whereas the total-influence analysis accounts for refitting only through
a first-order approximation. A finite-budget bound for the fully refitted
estimate requires uniform control of the AIPW score as the nuisance
parameters change. Rather than impose uniform overlap under every feasible
refit, we cap the inverse-propensity weights and impose stability
conditions on the penalized nuisance fits.

For a fixed cap \(M>1\), define
\[
w_{1,M}(x)
=
\min\{e(x)^{-1},M\},
\quad
w_{0,M}(x)
=
\min\{[1-e(x)]^{-1},M\},
\]
and the capped AIPW score
\begin{equation}\label{eq:capped-score}
\begin{aligned}
\varphi_M(O;\eta)
={}&
\mu_1(X)-\mu_0(X)
+A\,w_{1,M}(X)\{Y-\mu_1(X)\}\\
&-(1-A)\,w_{0,M}(X)\{Y-\mu_0(X)\}.
\end{aligned}
\end{equation}
The corresponding reference estimate is
\[
\psihat_M
=
\Pn\{\varphi_M(O;\etahat)\}.
\]
Because capping modifies the AIPW score, we denote its centered fitted
version by
\[
\widehat r_M(z)
=
\varphi_M(z;\etahat)-\psihat_M,
\]
rather than by \(\widehat\phi_M(z)\). Capping changes the estimator and
may introduce bias when the outcome regression is misspecified. We
therefore treat \(M\) as a tunable intervention chosen from the reference
sample and held fixed throughout the audit.

For \(s\in\{-1,+1\}\), define the capped directional gain
\[
d_{j,M}(s)
=
s\,\widehat r_M(z_j).
\]
Applying the capacity-constrained scan from Section~\ref{sec:fixed} gives
the accepted order
\[
\pi_{s,M,1},\pi_{s,M,2},\ldots
\]
and the exact fixed-pipeline curve
\begin{equation}\label{eq:capped-fixed-curve}
B^{\mathrm{fix}}_{s,M}(k)
=
\frac{1}{n+k}
\sum_{r=1}^{k}
d_{\pi_{s,M,r},M}(s),
\quad
B^{\mathrm{fix}}_{s,M}(0)=0.
\end{equation}

To control the additional change caused by nuisance refitting, consider a
feasible set \(I\) of \(k\) appended records and define
\[
P_{n,I}
=
\frac{
n\Pn+\sum_{j\in I}\delta_{z_j}
}{n+k}.
\]
Write the penalized nuisance objective under a probability measure \(P\)
as
\[
F_P(\theta)
=
P\ell(O;\theta)
+
\frac{1}{2}\theta^\top\Lambda\theta,
\]
where \(\ell(O;\theta)\) is the combined nuisance-model loss and
\(\Lambda\) is the ridge penalty matrix. We use the sign convention
\[
P U(O;\theta)
=
-\partial_\theta F_P(\theta),
\]
consistent with Section~\ref{sec:total-if}. Let
\(\widehat\theta^{\,I}\) minimize \(F_{P_{n,I}}\), and define the capped AIPW
estimate after nuisance refitting by
\[
\psihat_{M,\mathrm{refit}}^{\,I}
=
P_{n,I}
\{\varphi_M(O;\eta_{\widehat\theta^{\,I}})\}.
\]

The following condition controls the stability of nuisance refitting and
the resulting change in the capped AIPW score.

\begin{assumption}[Stability under feasible refitting]
\label{ass:refit-stability}
There exists a convex parameter region \(\Theta\) containing
\(\widehat\theta\) and every feasible refitted solution
\(\widehat\theta^{\,I}\). There also exist constants
\(\lambda>0\) and \(L_\varphi(M)<\infty\) such that, for every feasible
appended empirical distribution \(P_{n,I}\),
\(F_{P_{n,I}}\) is differentiable and \(\lambda\)-strongly convex on
\(\Theta\), with \(\widehat\theta^{\,I}\) its unconstrained minimizer.
Moreover,
\[
\left|
\varphi_M(z;\eta_{\theta_1})
-
\varphi_M(z;\eta_{\theta_2})
\right|
\le
L_\varphi(M)\|\theta_1-\theta_2\|_2
\]
for all \(z\) in the reference-sample or catalog support and all
\(\theta_1,\theta_2\in\Theta\).
\end{assumption}

Define
\[
B_U
=
\max_{z\in\Ccal}
\|U(z;\widehat\theta)\|_2.
\]
Under the finite-dimensional nuisance models considered here, a finite
catalog, bounded covariates and outcomes, a finite cap \(M\), and a
bounded parameter region make \(B_U\) and \(L_\varphi(M)\) finite. The next theorem establishes a finite-budget bound under nuisance refitting.

\begin{theorem}
\label{thm:refit-bound}
Under Assumption~\ref{ass:refit-stability}, every feasible set \(I\) of
\(k\) appended records satisfies
\begin{equation}\label{eq:refit-bound}
s\left\{
\psihat_{M,\mathrm{refit}}^{\,I}
-
\psihat_M
\right\}
\le
B^{\mathrm{fix}}_{s,M}(k)
+
\frac{k}{n+k}
\frac{L_\varphi(M)B_U}{\lambda},
\quad
s\in\{-1,+1\}.
\end{equation}
Consequently, for an append budget of at most \(m\),
\begin{equation}\label{eq:refit-envelope}
A^{\mathrm{safe}}_{s,M}(m)
=
\max_{0\le k\le m}
\left\{
B^{\mathrm{fix}}_{s,M}(k)
+
\frac{k}{n+k}
\frac{L_\varphi(M)B_U}{\lambda}
\right\}
\end{equation}
bounds the directional change in the fully refitted capped estimate.
\end{theorem}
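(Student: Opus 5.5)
The plan is to split the refitted movement into a fixed-pipeline part and a refitting part. The fixed part is handled by Theorem~\ref{thm:exact}; the refitting part is controlled by strong convexity together with the Lipschitz condition in Assumption~\ref{ass:refit-stability}. Write
\[
\psihat_{M,\mathrm{refit}}^{\,I}-\psihat_M
=
\Big\{P_{n,I}\varphi_M(O;\etahat)-\psihat_M\Big\}
+
P_{n,I}\Big\{\varphi_M(O;\eta_{\widehat\theta^{\,I}})-\varphi_M(O;\etahat)\Big\}.
\]
For the first term, note that \(P_{n,I}\varphi_M(O;\etahat)=\{n\psihat_M+\sum_{j\in I}\varphi_M(z_j;\etahat)\}/(n+k)\). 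Its \(s\)-directional value is therefore \(\sum_{j\in I}d_{j,M}(s)/(n+k)\). Theorem~\ref{thm:exact}, applied with the capped gains \(d_{j,M}(s)\) in place of \(d_j(s)\), shows this is at most \(B^{\mathrm{fix}}_{s,M}(k)\) for every feasible \(k\)-set. The proof of Theorem~\ref{thm:exact} uses only additivity of the gains and the nested capacity structure, so it transfers verbatim.

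For the second term, the Lipschitz condition applies because every atom of \(P_{n,I}\) is either a reference-sample point or a catalog point, and \(\widehat\theta,\widehat\theta^{\,I}\in\Theta\). It gives the bound \(L_\varphi(M)\|\widehat\theta^{\,I}-\widehat\theta\|_2\). The key step is to show
\[
\|\widehat\theta^{\,I}-\widehat\theta\|_2\le \frac{k}{n+k}\frac{B_U}{\lambda}.
\]
Since \(F_{P_{n,I}}\) is \(\lambda\)-strongly convex and differentiable on the convex set \(\Theta\), its gradient is strongly monotone:
\[
\big\langle \partial F_{P_{n,I}}(\widehat\theta)-\partial F_{P_{n,I}}(\widehat\theta^{\,I}),\,\widehat\theta-\widehat\theta^{\,I}\big\rangle
\ge \lambda\|\widehat\theta-\widehat\theta^{\,I}\|_2^2 .
\]
Because \(\widehat\theta^{\,I}\) is an unconstrained minimizer, \(\partial F_{P_{n,I}}(\widehat\theta^{\,I})=0\). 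Cauchy--Schwarz then gives \(\|\widehat\theta^{\,I}-\widehat\theta\|_2\le\|\partial F_{P_{n,I}}(\widehat\theta)\|_2/\lambda\).

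It remains to evaluate the gradient at \(\widehat\theta\). The penalty term is common to both measures. Using \(P_{n,I}=\frac{n}{n+k}\Pn+\frac{1}{n+k}\sum_{j\in I}\delta_{z_j}\) and the sign convention \(PU=-\partial F_P\), we get
\[
\partial F_{P_{n,I}}(\widehat\theta)
=\frac{n}{n+k}\partial F_{\Pn}(\widehat\theta)-\frac{1}{n+k}\sum_{j\in I}U(z_j;\widehat\theta).
\]
This step needs care. The ridge term \(\tfrac12\theta^\top\Lambda\theta\) is not scaled by the mixture weights, so the decomposition holds exactly only if \(U\) is defined as the per-observation loss gradient plus the penalty gradient. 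I will take that convention, with \(PU(O;\theta)=-\partial_\theta F_P(\theta)\) as the definition, so that linearity in \(P\) holds. Then \(\partial F_{\Pn}(\widehat\theta)=-\Pn U(O;\widehat\theta)=0\) by \eqref{eq:estimating-equation}. The norm of the remaining term is at most \(kB_U/(n+k)\).

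Combining the pieces yields \eqref{eq:refit-bound}. The envelope \eqref{eq:refit-envelope} follows because any feasible set under budget \(m\) has some size \(k\le m\), and its movement is bounded by the \(k\)-th term of the maximum. I expect the main obstacle to be the gradient bookkeeping in the previous paragraph. One must check that the penalty is incorporated into \(U\) consistently with the mixture \(P_{n,I}\), so that the reference-sample gradient exactly cancels. One must also check that strong monotonicity is available on \(\Theta\) without requiring \(\widehat\theta\) to be an interior minimizer there; convexity of \(\Theta\) and membership of both points suffice for that.
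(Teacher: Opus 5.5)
Your proposal is correct and takes essentially the same route as the paper's proof. Both use the same add-and-subtract decomposition at the reference fit, bound the fixed part by the capped greedy optimum \(B^{\mathrm{fix}}_{s,M}(k)\) via Theorem~\ref{thm:exact}, and obtain \(\|\widehat\theta^{\,I}-\widehat\theta\|_2\le kB_U/\{(n+k)\lambda\}\) from strong monotonicity, Cauchy--Schwarz and \(\Pn U(O;\widehat\theta)=0\) before applying the Lipschitz condition. Your ridge-term convention (including \(-\Lambda\theta\) in each \(U(O;\theta)\), so that \(PU=-\partial_\theta F_P\) is linear over probability measures whose mixture weights sum to one) is exactly the one the paper adopts.
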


Theorem~\ref{thm:refit-bound} provides a sufficient condition for a
decision to remain unchanged under all feasible append attacks up to the
declared budget. If crossing a decision threshold requires movement
\(D>0\) in direction \(s\), then
\[
A^{\mathrm{safe}}_{s,M}(m)<D
\]
ensures that no feasible attack of at most \(m\) records can cross that
threshold, subject to Assumption~\ref{ass:refit-stability}. Conversely,
\(A^{\mathrm{safe}}_{s,M}(m)\ge D\) means only that the bound cannot
certify stability; it does not establish the existence of an attack
achieving such movement. A proof is given in Appendix~\ref{sec:supp-proof-refit-bound}, and an
explicit Lipschitz constant for the working models is derived in
Appendix~\ref{sec:supp-lipschitz}.

The bound also provides an explicit criterion for choosing the cap. Before
examining the candidate rankings, specify a finite set of candidate caps
\(\mathcal M\), an append budget \(m\), a decision threshold \(c\), and,
if desired, a tolerance \(\tau\) for departure from the uncapped estimate.
For each \(M\in\mathcal M\), we recompute the capped estimate, its standard
error, the cap-specific rankings, and
\(A^{\mathrm{safe}}_{s,M}(m)\). For a point-estimate decision, define the
movement required to reach \(c\) in direction \(s\) by
\[
D_{s,M}
=
\max\{0,\,s(c-\psihat_M)\}.
\]
A candidate cap guarantees decision stability at budget \(m\) if
\[
A^{\mathrm{safe}}_{s,M}(m)<D_{s,M}
\]
for every decision-relevant direction \(s\). For a decision based on a
Wald endpoint, \(D_{s,M}\) is instead the distance from the relevant
capped interval endpoint to \(c\), with the reference-sample standard
error held fixed during the audit.

Among the caps satisfying the required directional inequalities and, when
imposed,
\[
|\psihat_M-\psihat|\le\tau,
\]
we select the largest \(M\). This applies the weakest capping that provides
the declared finite-budget protection while limiting departure from the
uncapped analysis. If no candidate cap satisfies these conditions, the
audit provides no certified protection at budget \(m\).

For practical candidate ranking under each cap, replace the uncapped score
in Section~\ref{sec:total-if} by \(\varphi_M\). Because capping modifies
only the final AIPW score, \(\widehat\theta\), \(H\), and \(U\) remain
unchanged, while
\[
\bar q_M
=
\Pn\partial_\theta
\varphi_M(O;\eta_{\widehat\theta})
\]
and
\[
\operatorname{IF}_{\mathrm{tot},M}(z)
=
\widehat r_M(z)
+
\bar q_M^{\!\top}H^{-1}U(z;\widehat\theta)
\]
depend on \(M\). Applying the same capacity-constrained scan defines
\(B^{\mathrm{tot}}_{s,M}(k)\) and \(A^{\mathrm{tot}}_{s,M}(m)\) as in
\eqref{eq:total-curve}--\eqref{eq:total-at-most}. Candidates are therefore re-ranked for every cap using
\(s\,\operatorname{IF}_{\mathrm{tot},M}(z_j)\). The reference-sample standard error of \(\psihat_M\) is estimated from the empirical variance of the centered capped scores
\(\widehat r_M(O_i)\); the capped total-influence values are used for
candidate ranking.

The finite-budget bound may be conservative, so we use total influence
for candidate ranking and the bound for certifying decision stability.
The guarantee is conditional on the fixed preprocessing, chosen cap,
specified catalog and capacities, and
Assumption~\ref{ass:refit-stability}; it excludes uncapped weights,
unstable nuisance learners, and attacks on preprocessing. When a Wald
endpoint is used, the resulting calculation is a decision-sensitivity
assessment rather than a post-selection coverage guarantee.

\section{Experimental design}\label{sec:design}

We use separate experiments to evaluate distinct components of the proposed auditing framework.
We first validate the method's two core calculations: exhaustive
enumeration confirms the exact fixed-pipeline curve, while one-record
appends across 20 synthetic samples evaluate the accuracy of total
influence (Section~\ref{sec:validation-results}). Larger synthetic studies examine
overlap, nuisance misspecification, attack structure, and inverse-weight
capping (Sections~\ref{sec:synthetic-results} and
\ref{sec:calibration-results}). A multisite simulation evaluates
hierarchical source capacities (Section~\ref{sec:governance-results}), and
two public-data analyses illustrate the audit beyond fully synthetic
settings (Section~\ref{sec:external-results}).

Continuous covariates are standardized using the reference sample, and an
intercept is included. Propensity and arm-specific outcome models use ridge
penalties of \(0.01\) in simulations and \(0.02\) in the external
analyses. Fitted propensities are clipped to
\([10^{-7},1-10^{-7}]\) only for numerical stability; inverse-weight caps
\(M\) are applied separately. Every realized attack refits both nuisance
models. The primary metric is realized directional movement after
refitting. Additional metrics include fixed-pipeline and total-influence
movement, repeated-record and random baselines, clean and attacked root
mean squared error, and one-record prediction accuracy. Random baselines
use distinct profiles and average three draws.

Except in the exhaustive checks, each observed covariate profile generates
four candidate records by crossing \(a\in\{0,1\}\) with the first and
ninety-ninth percentiles of the reference residuals in arm \(a\). Each
candidate outcome equals the fitted arm-specific mean plus the
corresponding residual percentile, clipped to the fixed outcome range. The
four alternatives for one profile have joint capacity one. These common
procedures are held fixed across the designs below, which vary the
data-generating setting and source-capacity structure. Further details on catalog construction, nuisance fitting, numerical
clipping, random baselines, and standard-error calculation are provided
in Appendix~\ref{sec:supp-implementation}.

\paragraph{Synthetic evaluation.}
To study overlap, nuisance misspecification, attack structure, and weight
capping, we draw \(n=1500\) observations with \(X\sim N(0,I_8)\) and true
effect \(\tau=0.5\). In the correctly specified and outcome-misspecified
regimes,
\[
e(X)=\expit\{\alpha(0.8X_1-0.4X_2+0.25X_3)\}.
\]
Under propensity misspecification, the final term is replaced by
\(0.55X_1X_2\), which is omitted from the fitted linear logit. The
correctly specified outcome mean is linear; the misspecified mean
additionally contains a sine, quadratic, and interaction term. Gaussian
noise with standard deviation \(0.9\) is truncated at three standard
deviations. We consider both nuisances correct, outcome misspecified, and
propensity misspecified. Across
\(\alpha\in\{0.75,1.5,2.5\}\), append budgets of
\(0.5\%,1\%,2\%,5\%\), and 60 replications per regime--overlap
combination, this yields 540 reference data sets. Cap comparisons use 50
replications per overlap level and
\(M\in\{\infty,50,20,10,5,3\}\) at a \(2\%\) budget.

\paragraph{Multisite source capacities.}
To isolate the effect of hierarchical contribution limits, we simulate 40
pooled networks with eight sites, 20 authenticated contributors per site,
and ten reference records per contributor (\(n=1600\)). Profiles are
nested within contributors, contributors within sites, and sites within a
network. Site-specific treatment intercepts and an observed
contributor-risk variable concentrate limited overlap in a subset of
sources while preserving correct nuisance specification. We compare four
policies: capacity one per profile; the profile rule plus a network cap of
32; the profile rule plus four records per site and a network cap of 32;
and a hierarchy of one record per contributor, four per site, and 32
network-wide. For each target policy and budget, we refit candidate sets
initialized from fixed-pipeline and total-influence rankings under all four
policies, retaining only sets feasible under the target policy. This
multi-start procedure prevents a policy from appearing safer because it
was evaluated using a weaker attack initialization. The complete policy definitions and multi-start implementation are given
in Appendix~\ref{sec:supp-governance}.

\paragraph{Public-data analyses.}
We apply the audit to one semi-synthetic benchmark and one observational
analysis. The Atlantic Causal Inference Conference (ACIC) 2016 benchmark combines
real covariates with simulated treatments and potential outcomes; we
analyze ten instances with \(n=4802\) and 79 encoded covariates
\citep{dorie2019}. The National Health and Nutrition Examination Survey
Epidemiologic Follow-up Study (NHEFS) data contain 1566 participants and
18 supplied covariates; treatment is smoking cessation and the outcome is
ten-year weight change \citep{hernan2020}. Both data sets are loaded using
the Python package \texttt{causallib} \citep{shimoni2019}. ACIC permits
evaluation against known potential outcomes, whereas the smoking analysis
is a stress test of the fitted pipeline rather than a new identification
analysis. Deterministic seeds, tests, outputs, and figure scripts are
provided with the manuscript.

\section{Results}\label{sec:results}

Following the experimental design above, we first validate the exact and
first-order calculations and then examine adversarial vulnerability across
overlap, nuisance specification, source capacities, inverse-weight caps,
and public-data applications. Together, these analyses show how the
different components of the proposed audit perform and what each
contributes in practice. Additional numerical summaries are provided in
Appendix~\ref{sec:supp-numerical}.

\subsection{Exact optimization and local refit prediction}
\label{sec:validation-results}

We first verify the exact fixed-pipeline calculation and tests
whether total influence predicts one-record movement after nuisance
refitting. Exhaustive enumeration on small grouped catalogs reproduced
the greedy nested-capacity solution exactly, and direct fixed-pipeline
recomputation matched Theorem~\ref{thm:exact} to machine precision.

Across the one-record checks from 20 synthetic reference samples, the
fixed-pipeline calculation had root mean squared error \(0.00563\) and
correlation \(0.990\) with realized refitted movement. Total influence
reduced the error to \(0.00395\), a \(29.9\%\) improvement, and increased
the correlation to \(0.999\) (Figure~\ref{fig:refit-prediction}). Its
relative error for the most adverse candidate in a held-out check was
\(5.8\%\). The refitting term also changed candidate ordering when
treatment residuals and propensity scores acted in opposing directions.
Thus, total influence is an accurate local ranking device, but not a
large-budget guarantee.

\begin{figure}[t]
\centering
\includegraphics[width=0.94\textwidth]{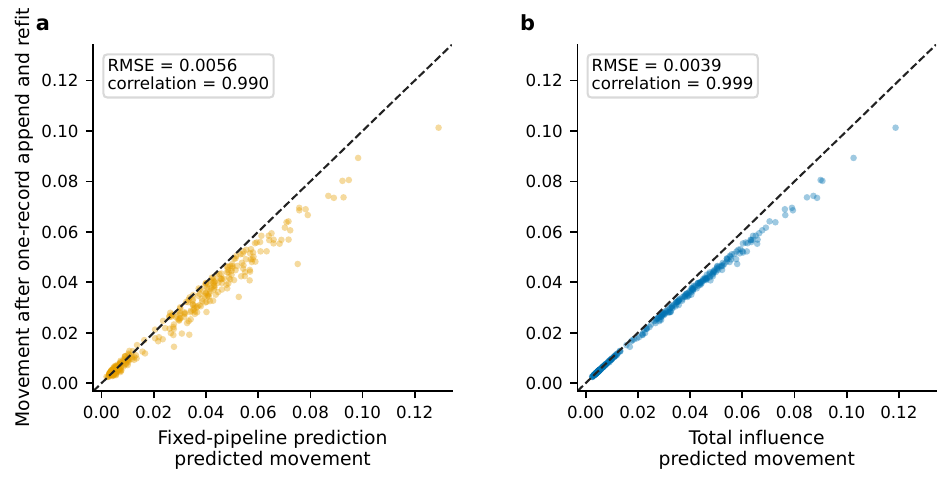}
\caption{One-record prediction of realized movement after refitting both
nuisance models.
\textbf{A.} Fixed-pipeline prediction.
\textbf{B.} Total-influence prediction.
Points pool candidate-level checks from 20 synthetic reference samples;
the dashed line is equality.}
\alttext{Two scatterplots compare predicted with realized movement after
one-record appends. Fixed-pipeline predictions show visible deviation from
the equality line, whereas total-influence predictions lie much closer to
it.}
\label{fig:refit-prediction}
\end{figure}

\subsection{Overlap, misspecification, and attack dispersion}
\label{sec:synthetic-results}

We next examine how overlap, nuisance misspecification, and attack
dispersion affect realized movement. Poor overlap sharply increased
vulnerability, and strategic appends remained effective when either
nuisance model was correctly specified. At a \(1\%\) budget, with both
nuisances correct, median downward movement rose from \(0.113\) under good
overlap to \(0.413\) under moderate overlap and \(1.761\) under poor
overlap, relative to a true effect of \(0.5\)
(Table~\ref{tab:synthetic-one}). Random plausible appends remained centered
within \(0.001\) of zero. Under moderate overlap, movement was \(0.578\)
with a misspecified outcome model and \(0.256\) with a misspecified
propensity model. Double robustness therefore protects consistency under
a union model but does not bound directional movement from feasible
appends.

\begin{table}[t]
\centering
\small
\caption{Median realized downward movement at a \(1\%\) append budget over
60 replications per cell. Capacity attacks select records by total
influence subject to profile capacities and then refit both nuisance
models; the final row reports random appends. The true effect is \(0.5\).}
\label{tab:synthetic-one}
\begin{tabular}{lrrr}
\toprule
Nuisance regime & \(\alpha=0.75\) & \(\alpha=1.50\) & \(\alpha=2.50\) \\
\midrule
Both correct & 0.113 & 0.413 & 1.761 \\
Outcome misspecified & 0.168 & 0.578 & 2.477 \\
Propensity misspecified & 0.097 & 0.256 & 0.736 \\
Random append, both correct & 0.000 & \(-0.001\) & 0.001 \\
\bottomrule
\end{tabular}
\end{table}

Attack dispersion became increasingly important as the budget grew. Under
moderate overlap, repeating the single worst record moved the estimate by
\(0.368\) at a \(0.5\%\) budget, compared with \(0.352\) for the
profile-capacity attack. At \(2\%\), the comparison reversed to \(0.277\)
versus \(0.452\), and at \(5\%\) to \(0.247\) versus \(0.507\).
Repeated records were increasingly absorbed by nuisance refitting,
whereas profile capacities spread the attack across distinct covariate
profiles. Exact fixed-pipeline movement increased from \(0.569\) to
\(2.248\) over the same budget range; its widening gap from realized
movement shows that it can be conservative after refitting
(Figure~\ref{fig:synthetic}).

\begin{figure}[t]
\centering
\includegraphics[width=\textwidth]{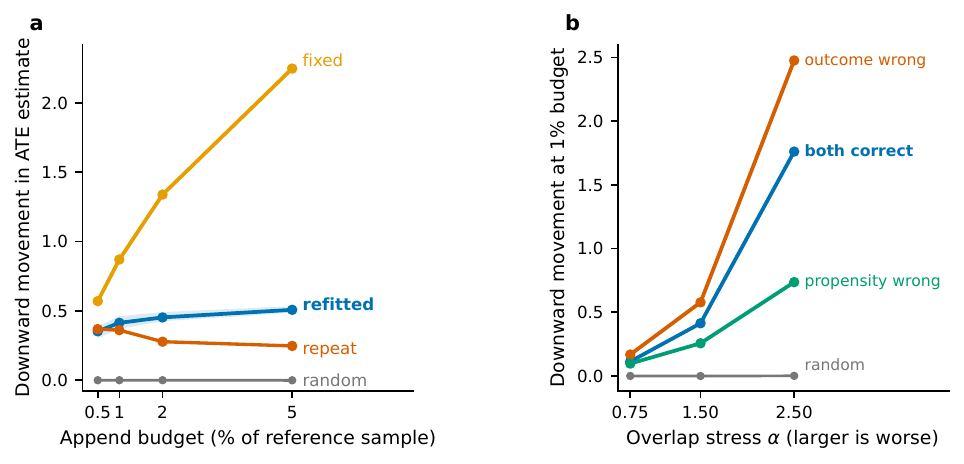}
\caption{Synthetic poisoning results.
\textbf{A.} Downward movement under moderate overlap with both nuisance
models correctly specified. The fixed-pipeline curve holds the nuisance
fits fixed; the refitted curve selects by total influence and then refits.
Shading shows the interquartile range.
\textbf{B.} Realized refitted movement at a \(1\%\) budget across overlap
and nuisance-specification regimes.}
\alttext{Two-panel line chart of synthetic attacks. Panel A compares
fixed-pipeline, refitted, repeated-record, and random movement as the
append budget increases. Panel B shows that realized movement at a
one-percent budget increases sharply as overlap worsens, including when
either nuisance model is correctly specified.}
\label{fig:synthetic}
\end{figure}

\subsection{Hierarchical capacities reduce attainable movement}
\label{sec:governance-results}

This analysis tests whether distributing contribution limits across
network, site, and contributor levels reduces attack concentration and
attainable movement. The reduction was substantial with the nuisance fits
fixed, but more modest after refitting. At a \(5\%\) budget, median exact
fixed-pipeline movement fell from \(4.47\) under profile-only capacity to
\(2.62\) after adding a network cap of 32. Distributing those slots as
four per site reduced movement to \(1.84\), and adding one record per
contributor reduced it to \(1.79\), \(60.0\%\) below profile-only
capacity. Policies with a network cap plateaued after all 32 slots were
used.

After nuisance refitting, median downward movement fell from \(0.605\)
under profile-only capacity to \(0.552\) under the full hierarchy, an
\(8.8\%\) reduction. The hierarchy nevertheless changed who could dominate
the attack: the most represented site's median share fell from \(36.9\%\)
to \(12.5\%\), and the maximum number of selected records from one
contributor fell from five to one. Hierarchical capacities therefore
strongly restrict direct score exposure and attack concentration, although
nuisance-mediated vulnerability remains
(Figure~\ref{fig:governance}).

\begin{figure}[t]
\centering
\includegraphics[width=\textwidth]{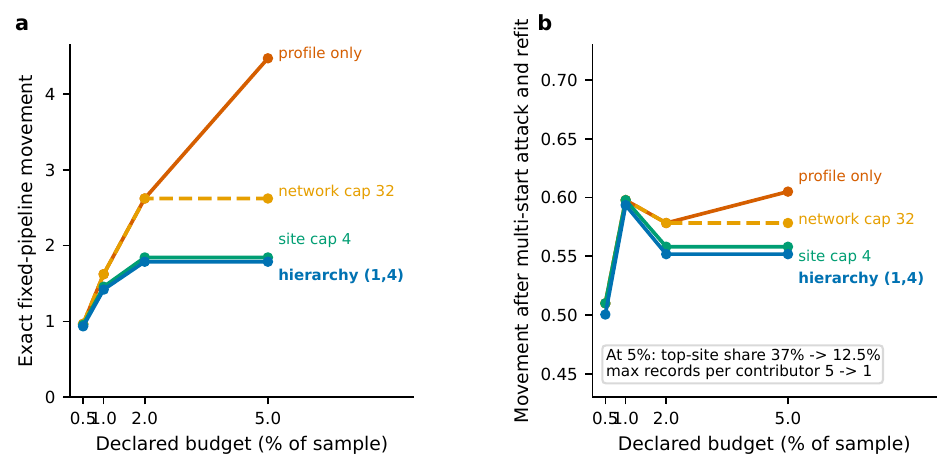}
\caption{Multilevel governance-capacity experiment over 40 pooled
networks; lines show medians.
\textbf{A.} Exact fixed-pipeline movement under four source-capacity
policies.
\textbf{B.} Realized movement after the multi-start attack and nuisance
refitting.}
\alttext{Two-panel line chart comparing four source-capacity policies.
Hierarchical site and contributor limits substantially reduce exact
fixed-pipeline movement and produce a smaller reduction in realized
movement after nuisance refitting.}
\label{fig:governance}
\end{figure}

\subsection{Weight caps reduce attacked error, but the bound is conservative}
\label{sec:calibration-results}

We then evaluate whether inverse-weight caps reduce attacked error,
whether the cap-specific local audit ranks their performance, and whether
the finite-budget bound is practically sharp. Under poor overlap, attacked
root mean squared error fell from \(1.369\) without capping to \(0.576\),
\(0.390\), \(0.266\), and \(0.225\) at
\(M=20,10,5,\) and \(3\), respectively. The \(83.6\%\) reduction at
\(M=3\) occurred without a clean-error penalty in this correctly specified
design: clean root mean squared error fell from \(0.0698\) to \(0.0618\).
This favorable clean-data result need not persist under outcome
misspecification or a different catalog.

The cap-specific total-influence summary ranked attacked performance well.
Across 18 overlap--cap settings, the Spearman correlation between
\[
\left\{
\widehat{\mathrm{se}}(\psihat_M)^2+
\bigl[A_{-1,M}^{\mathrm{tot}}(m)\bigr]^2
\right\}^{1/2}
\]
and realized root mean squared error under the \(2\%\) attack was \(0.95\);
each attack was re-optimized for its cap. The finite-budget envelope
exceeded all 750 corresponding realized movements, but its smallest median
envelope-to-realized ratio was approximately \(1.09\times10^4\). Thus,
total influence was informative for comparing caps, whereas the envelope
provided a conservative finite-budget upper bound rather than a practical
prediction or tuning criterion (Figure~\ref{fig:calibration}).

\begin{figure}[t]
\centering
\includegraphics[width=0.95\textwidth]{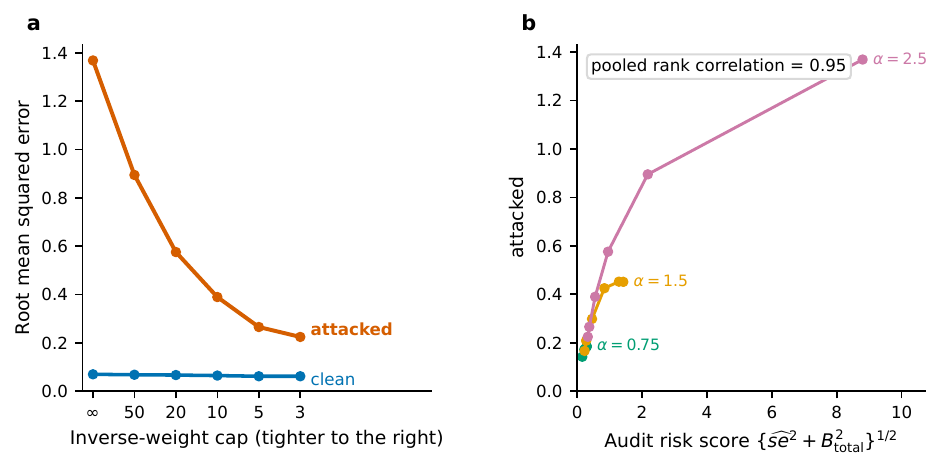}
\caption{Inverse-weight cap comparison at a \(2\%\) append budget.
\textbf{A.} Clean and attacked root mean squared error under poor overlap.
\textbf{B.} Cap-specific local audit summary versus realized attacked
error across 18 overlap--cap settings. Each attack is re-optimized for its
cap.}
\alttext{Two-panel comparison of inverse-weight caps. Panel A shows that
attacked root mean squared error decreases as the cap tightens while clean
error remains low. Panel B shows a strong positive association between the
cap-specific audit score and realized attacked error across overlap
settings.}
\label{fig:calibration}
\end{figure}

\subsection{Public-data audits show material sensitivity}
\label{sec:external-results}

Here we assess whether our findings persist beyond
the simulated data. In the ACIC benchmark, dispersed profile-limited
attacks remained more damaging than repeated copies of one record. Median
realized downward movement across ten instances was \(0.263\), \(0.415\),
and \(0.647\) at budgets of \(0.5\%\), \(1\%\), and \(2\%\),
respectively. At \(2\%\), the interquartile range was
\(0.510\)--\(0.802\), whereas unlimited repetition moved the estimate by
only \(0.242\), and random appends remained near zero.

The smoking-cessation analysis showed substantial sensitivity in both
directions. The uncapped reference estimate was \(3.40\) kg with a
reference-sample standard error of \(0.44\) kg. At a \(1\%\) budget,
corresponding to 16 records, the estimate moved \(1.34\) kg downward or
\(1.48\) kg upward, equivalent to \(3.09\) and \(3.39\) standard errors.
At \(2\%\), the corresponding movements were \(2.02\) and \(2.22\) kg.
At \(5\%\), the downward attack left an estimate of approximately
\(0.04\) kg without reversing its sign. A cap of five changed the
reference estimate only from \(3.40\) to \(3.42\) kg and reduced
\(1\%\) movement to \(1.12\)--\(1.21\) kg, but did not remove the
sensitivity. These findings show that small feasible append budgets can
materially change fitted causal conclusions; they do not imply that the
original data were manipulated (Figure~\ref{fig:external}).

\begin{figure}[t]
\centering
\includegraphics[width=\textwidth]{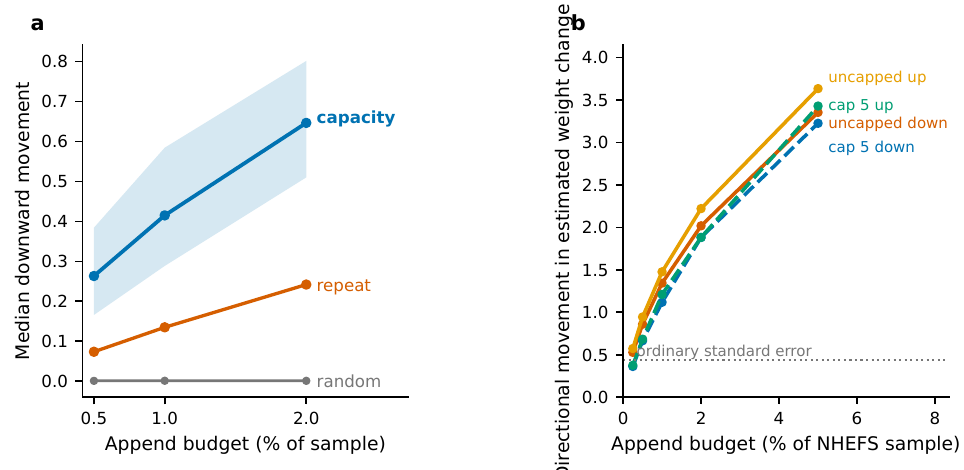}
\caption{Public-data audits.
\textbf{A.} ACIC 2016 median downward movement across ten instances;
shading shows the interquartile range.
\textbf{B.} Smoking-cessation movement in both directions under uncapped
and cap-five analyses; the dotted line is the uncapped reference-sample
standard error.}
\alttext{Two-panel public-data analysis. Panel A shows increasing downward
movement in ACIC as the append budget grows, with profile-limited attacks
exceeding repeated and random appends. Panel B shows increasing upward and
downward movement in the smoking-cessation estimate, with somewhat smaller
movement under a cap of five.}
\label{fig:external}
\end{figure}

\subsection{Scope and limitations}\label{sec:limitations}

The audit is conditional on the declared catalog and contribution
limits. Broader outcome ranges, looser covariate rules, or larger
capacities strengthen the adversary. Catalogs based on observed profiles
exclude novel covariate patterns and need not upper-bound a real attack,
while marginal residual quantiles may violate unmodeled conditional
restrictions. Domain-specific generators, contributor logs, and ingestion
policies would improve catalog realism.

The exact result applies to a fixed-pipeline AIPW estimator with nested
source groups; cross-classified constraints require more general
combinatorial optimization. Total influence is a local approximation, and
the finite-budget bound requires strongly convex finite-dimensional
nuisance models, bounded records, and capped inverse weights. We do not
establish poisoning-uniform coverage, a sharp global refit bound, or theory
for flexible cross-fitted learners, and decision-instability budgets hold
the reference-sample standard error fixed. Honest additions that change
the target population may represent estimand drift rather than poisoning.

The empirical evaluation also has limitations. The governance labels are
simulated, ACIC provides only ten instances, and the smoking-cessation
analysis has no ground-truth effect. Agreement across these settings
supports the proposed auditing framework but does not establish a
universal vulnerability rate.

\section{Conclusion}\label{sec:conclusion}

A causal estimate can satisfy familiar model-based diagnostics and still
be vulnerable to strategically selected records. We develop an append-only
data-poisoning audit that distinguishes three levels of analysis: exact
movement under a fixed pipeline, a first-order diagnostic under nuisance
refitting, and a conservative finite-budget bound for the fully refitted
capped estimator. This distinguishes exact fixed-pipeline results from local refit
approximations and condition-dependent finite-budget guarantees.

The experiments verify the exact greedy calculation, show that total
influence improves local refit prediction, and demonstrate that strategic
appends can materially move AIPW estimates even when only one nuisance model is correctly specified. Random additions poorly represented a direction-aware
adversary, and dispersed profile-limited attacks could be more damaging
than repeated copies of one extreme record. Source capacities and
inverse-weight caps reduced, but did not eliminate, this vulnerability.
The finite-budget bound exceeded all realized movements but was too
conservative for practical prediction or cap selection.

Current theory is limited to finite-dimensional penalized nuisance models,
fixed preprocessing, capped inverse weights, and nested source capacities;
total influence is local, and the finite-budget bound is conservative.
Future work should extend the audit to flexible cross-fitted learners,
cross-classified source constraints, and sharper global refit guarantees,
potentially using reachability methods such as AGT. For current AIPW
analyses, we recommend prespecifying plausible record catalogs and
contribution limits, reporting directional decision-instability budgets,
and distinguishing exact fixed-pipeline results from local refit
diagnostics and finite-budget bounds.

\section*{Data availability}

The Atlantic Causal Inference Conference 2016 and National Health and
Nutrition Examination Survey Epidemiologic Follow-up Study data are
publicly available and were loaded using the Python package
\texttt{causallib}.


\section*{Funding}

This work was supported by the Samsung Science and Technology Foundation (Project No. SSTF-BA2502-01).

\section*{Conflicts of interest}
None declared.

\bibliographystyle{plainnat}
\bibliography{references}

\newpage
\bigskip

\appendix
\input{supplement}

\end{document}

%% file: supplement.tex
\section*{\Large \centerline{SUPPLEMENTARY MATERIALS}}
\addcontentsline{toc}{section}{Supplementary Material}

\numberwithin{equation}{section}
\makeatletter
\@addtoreset{table}{section}
\@addtoreset{figure}{section}
\makeatother
\renewcommand{\thetable}{\thesection.\arabic{table}}
\renewcommand{\thefigure}{\thesection.\arabic{figure}}
\renewcommand{\theHequation}{\thesection.\arabic{equation}}
\renewcommand{\theHtable}{\thesection.\arabic{table}}
\renewcommand{\theHfigure}{\thesection.\arabic{figure}}

\section{Proofs of the Main Results}
\label{sec:supp-proofs}

This section proves the exact fixed-pipeline result, derives the
total-influence expression, and establishes the finite-budget bound under
nuisance refitting.

\subsection{Proof of Theorem~\ref{thm:exact}}
\label{sec:supp-proof-exact}

\begin{proof}
Fix \(s\in\{-1,+1\}\) and abbreviate
\[
d_j=d_j(s)
=
s\{\varphi(z_j;\etahat)-\psihat\}.
\]
For any feasible set \(I\) of cardinality \(k\), the definition of the
fixed-pipeline estimate gives
\begin{align}
s\{\psihat_{\mathrm{fix}}^{\,I}-\psihat\}
&=
s\left[
\frac{
n\psihat+\sum_{j\in I}\varphi(z_j;\etahat)
}{n+k}
-\psihat
\right]
\nonumber\\
&=
\frac{s}{n+k}
\left\{
n\psihat+\sum_{j\in I}\varphi(z_j;\etahat)
-(n+k)\psihat
\right\}
\nonumber\\
&=
\frac{s}{n+k}
\left\{
\sum_{j\in I}\varphi(z_j;\etahat)-k\psihat
\right\}
\nonumber\\
&=
\frac{1}{n+k}
\sum_{j\in I}
s\{\varphi(z_j;\etahat)-\psihat\}
\nonumber\\
&=
\frac{1}{n+k}\sum_{j\in I}d_j.
\label{eq:supp-fixed-movement}
\end{align}
Thus, for fixed \(k\), maximizing directional movement is equivalent to
maximizing \(\sum_{j\in I}d_j\) over feasible sets of cardinality \(k\).

Order the candidates in nonincreasing \(d_j\), using a fixed
deterministic rule to break ties, and accept each candidate whose
inclusion preserves all capacities. Let
\[
\pi_{s,1},\ldots,\pi_{s,R}
\]
denote the accepted order, where
\[
R
=
\max\left\{
|I|:
I\subseteq\{1,\ldots,J\},
\,
|I\cap L|\le c_L
\text{ for every }L\in\mathcal L
\right\}
\]
is the maximum feasible cardinality. For \(r=0,\ldots,R\), define
\[
G_r
=
\{\pi_{s,1},\ldots,\pi_{s,r}\},
\quad
G_0=\varnothing.
\]

Fix \(k\le R\). We show by induction that, for every
\(r=0,\ldots,k\), there exists a maximum-gain feasible set of cardinality
\(k\) containing \(G_r\). The claim is immediate for \(r=0\).

Suppose that a maximum-gain feasible set \(S\) of cardinality \(k\)
contains \(G_{r-1}\), and write
\[
e=\pi_{s,r}.
\]
If \(e\in S\), no modification is needed. Suppose instead that
\(e\notin S\).

First consider the case in which \(S\cup\{e\}\) is feasible. Because
\(r-1<k\), choose any
\[
j\in S\setminus G_{r-1}
\]
and define
\[
S'
=
S\setminus\{j\}\cup\{e\}.
\]
Then \(S'\) is feasible, has cardinality \(k\), and contains \(G_r\).

Now suppose that \(S\cup\{e\}\) is infeasible. Then there is at least one
group \(L\in\mathcal L\) containing \(e\) whose capacity is attained by
\(S\):
\[
|S\cap L|=c_L.
\]
Among such groups, let \(L_\ast\) be minimal under inclusion. Because
\(G_{r-1}\cup\{e\}\) is feasible,
\[
|G_{r-1}\cap L_\ast|+1
\le
c_{L_\ast}
=
|S\cap L_\ast|.
\]
Hence, there exists
\[
j\in
(S\cap L_\ast)\setminus G_{r-1}.
\]
Define
\[
S'
=
S\setminus\{j\}\cup\{e\}.
\]

We verify that \(S'\) is feasible. Consider any \(K\in\mathcal L\). If
\(e\notin K\), the number selected from \(K\) cannot increase. If both
\(e\) and \(j\) belong to \(K\), its selected count is unchanged.

It remains to consider the case \(e\in K\) and \(j\notin K\). Since
\(e\in K\cap L_\ast\), nestedness implies that either
\(K\subseteq L_\ast\) or \(L_\ast\subseteq K\). The latter is impossible
because \(j\in L_\ast\) but \(j\notin K\). Therefore,
\[
K\subsetneq L_\ast.
\]
By the minimality of \(L_\ast\), the capacity of \(K\) is not attained by
\(S\). Since the capacities are integer-valued,
\[
|S\cap K|\le c_K-1.
\]
Adding \(e\) therefore preserves the capacity of \(K\). Thus, \(S'\) is
feasible.

In either case, candidate \(j\) cannot have appeared before \(e\) in the
gain-ordered scan. If it had, the accepted set when \(j\) was considered
would have been a subset of \(G_{r-1}\). Adding \(j\) to that set would
have remained feasible because
\[
G_{r-1}\cup\{j\}\subseteq S
\]
and \(S\) is feasible. Candidate \(j\) would therefore have been
accepted, contradicting \(j\notin G_{r-1}\). Hence,
\[
d_e\ge d_j.
\]
It follows that \(S'\) has gain no smaller than that of \(S\), so \(S'\)
is also a maximum-gain feasible set and contains \(G_r\).

By induction, there exists a maximum-gain feasible set of cardinality
\(k\) containing \(G_k\). Since \(G_k=\{\pi_{s,1},\ldots,\pi_{s,k}\}\) is a maximum-gain feasible
set of cardinality \(k\), applying \eqref{eq:supp-fixed-movement} with
\(I=G_k\) gives
\[
B_s^{\mathrm{fix}}(k)
=
\frac{1}{n+k}
\sum_{r=1}^k d_{\pi_{s,r}}(s).
\]
Finally, maximizing over feasible cardinalities up to \(m\) gives
\[
A_s^{\mathrm{fix}}(m)
=
\max_{0\le k\le m}B_s^{\mathrm{fix}}(k).
\]
If \(m>R\), the upper limit is replaced by \(R\).
\end{proof}

\subsection{Proof of Proposition~\ref{prop:total-if}}
\label{sec:supp-proof-total-if}

\begin{proof}
For \(P_t=(1-t)\Pn+t\delta_z\), define
\[
\Gamma(t,\theta)
=
P_tU(O;\theta).
\]
By continuous differentiability,
\(\Gamma\) is continuously differentiable near
\((0,\widehat\theta)\), and
\[
\partial_\theta\Gamma(0,\widehat\theta)
=
\Pn\partial_\theta U(O;\widehat\theta)
=
-H
\]
is nonsingular. The implicit-function theorem therefore gives a
differentiable local solution \(\theta(t)\).

Since
\[
P_tU\{O;\theta(t)\}
=
(1-t)\Pn U\{O;\theta(t)\}
+
t\,U\{z;\theta(t)\},
\]
the product and chain rules give
\[
\begin{aligned}
0
&=
\left.
\frac{\mathrm d}{\mathrm dt}
P_tU\{O;\theta(t)\}
\right|_{t=0}\\
&=
U(z;\widehat\theta)
-
\Pn U(O;\widehat\theta)
+
\Pn\partial_\theta U(O;\widehat\theta)\dot\theta(0)\\
&=
(\delta_z-\Pn)U(O;\widehat\theta)
+
\Pn\partial_\theta U(O;\widehat\theta)\dot\theta(0).
\end{aligned}
\]

Because
\[
\Pn U(O;\widehat\theta)=0
\]
and
\[
\Pn\partial_\theta U(O;\widehat\theta)=-H,
\]
we obtain
\[
0
=
U(z;\widehat\theta)-H\dot\theta(0),
\]
and hence
\[
\dot\theta(0)
=
H^{-1}U(z;\widehat\theta).
\]

Now define the refitted AIPW functional along the path by
\[
g(t)=P_t\varphi_{\theta(t)}.
\]
Proceeding as above, differentiating at \(t=0\) gives
\[
\begin{aligned}
g'(0)
&=
(\delta_z-\Pn)\varphi_{\widehat\theta}
+
\left\{
\Pn\partial_\theta
\varphi_{\widehat\theta}(O)
\right\}^{\!\top}
\dot\theta(0)\\
&=
\varphi_{\widehat\theta}(z)
-
\Pn\varphi_{\widehat\theta}(O)
+
\bar q^{\!\top}H^{-1}U(z;\widehat\theta).
\end{aligned}
\]
Because
\[
\Pn\varphi_{\widehat\theta}(O)=\psihat
\]
and
\[
\widehat\phi(z)
=
\varphi_{\widehat\theta}(z)-\psihat,
\]
this becomes
\[
g'(0)
=
\widehat\phi(z)
+
\bar q^{\!\top}H^{-1}U(z;\widehat\theta)
=
\operatorname{IF}_{\mathrm{tot}}(z).
\]
\end{proof}

For a fixed reference sample, a first-order Taylor expansion at \(t=0\) gives
\begin{equation}
\label{eq:supp-local-expansion}
P_t\varphi_{\theta(t)}
-
\psihat
=
t\,\operatorname{IF}_{\mathrm{tot}}(z)
+
o(t)
\quad
\text{as }t\to0.
\end{equation}
Appending one copy of \(z\) corresponds to
\[
t=\frac{1}{n+1},
\]
so
\[
\frac{\operatorname{IF}_{\mathrm{tot}}(z)}{n+1}
\]
is the resulting first-order approximation. 

\subsection{Proof of Theorem~\ref{thm:refit-bound}}
\label{sec:supp-proof-refit-bound}

\begin{proof}
Fix a feasible set \(I\) of \(k\) appended records and write
\[
P_{n,I}
=
\frac{n\Pn+\sum_{j\in I}\delta_{z_j}}{n+k}.
\]
By Assumption~\ref{ass:refit-stability},
\(\widehat\theta\) and \(\widehat\theta^{\,I}\) belong to the same convex
region \(\Theta\), and \(F_{P_{n,I}}\) is differentiable and
\(\lambda\)-strongly convex on \(\Theta\). Its gradient is therefore
\(\lambda\)-strongly monotone:
\[
\left\langle
\nabla F_{P_{n,I}}(\theta_1)
-
\nabla F_{P_{n,I}}(\theta_2),
\theta_1-\theta_2
\right\rangle
\ge
\lambda\|\theta_1-\theta_2\|_2^2
\]
for all \(\theta_1,\theta_2\in\Theta\).

Because \(\widehat\theta^{\,I}\) is an unconstrained minimizer and
\(F_{P_{n,I}}\) is differentiable,
\[
\nabla F_{P_{n,I}}(\widehat\theta^{\,I})=0.
\]
Taking
\[
\theta_1=\widehat\theta,
\quad
\theta_2=\widehat\theta^{\,I},
\]
and applying the Cauchy--Schwarz inequality gives
\[
\lambda
\|\widehat\theta^{\,I}-\widehat\theta\|_2
\le
\left\|
\nabla F_{P_{n,I}}(\widehat\theta)
\right\|_2.
\]
Using the sign convention above,
\[
\left\|
\nabla F_{P_{n,I}}(\widehat\theta)
\right\|_2
=
\left\|
P_{n,I}U(O;\widehat\theta)
\right\|_2.
\]
Since
\[
\Pn U(O;\widehat\theta)=0,
\]
we have
\[
\begin{aligned}
P_{n,I}U(O;\widehat\theta)
&=
\frac{1}{n+k}
\sum_{j\in I}U(z_j;\widehat\theta),\\
\left\|
P_{n,I}U(O;\widehat\theta)
\right\|_2
&\le
\frac{k}{n+k}B_U.
\end{aligned}
\]
Consequently,
\begin{equation}
\label{eq:supp-parameter-stability}
\|\widehat\theta^{\,I}-\widehat\theta\|_2
\le
\frac{k}{n+k}\frac{B_U}{\lambda}.
\end{equation}

Next, add and subtract the capped score evaluated at the reference
nuisance fit:
\[
\begin{aligned}
\psihat_{M,\mathrm{refit}}^{\,I}-\psihat_M
={}&
\left[
P_{n,I}\{\varphi_M(O;\etahat)\}
-
\Pn\{\varphi_M(O;\etahat)\}
\right]\\
&+
P_{n,I}
\left[
\varphi_M(O;\eta_{\widehat\theta^{\,I}})
-
\varphi_M(O;\etahat)
\right].
\end{aligned}
\]

The first term satisfies
\[
\begin{aligned}
P_{n,I}\{\varphi_M(O;\etahat)\}
-
\Pn\{\varphi_M(O;\etahat)\}
&=
\frac{1}{n+k}
\sum_{j\in I}
\left\{
\varphi_M(z_j;\etahat)-\psihat_M
\right\}\\
&=
\frac{1}{n+k}
\sum_{j\in I}\widehat r_M(z_j).
\end{aligned}
\]
Therefore, its movement in direction \(s\) is bounded by the exact
fixed-pipeline optimum:
\[
\frac{1}{n+k}
\sum_{j\in I}s\,\widehat r_M(z_j)
\le
B_{s,M}^{\mathrm{fix}}(k).
\]

For the second term, the support of \(P_{n,I}\) is contained in the union
of the reference sample and the candidate catalog. The Lipschitz condition
in Assumption~\ref{ass:refit-stability} therefore gives
\[
\begin{aligned}
\left|
P_{n,I}
\left[
\varphi_M(O;\eta_{\widehat\theta^{\,I}})
-
\varphi_M(O;\etahat)
\right]
\right|
&\le
P_{n,I}
\left|
\varphi_M(O;\eta_{\widehat\theta^{\,I}})
-
\varphi_M(O;\etahat)
\right|\\
&\le
L_\varphi(M)
\|\widehat\theta^{\,I}-\widehat\theta\|_2\\
&\le
\frac{k}{n+k}
\frac{L_\varphi(M)B_U}{\lambda},
\end{aligned}
\]
where the final inequality follows from
\eqref{eq:supp-parameter-stability}.

Combining the two terms yields
\[
s\left\{
\psihat_{M,\mathrm{refit}}^{\,I}
-
\psihat_M
\right\}
\le
B_{s,M}^{\mathrm{fix}}(k)
+
\frac{k}{n+k}
\frac{L_\varphi(M)B_U}{\lambda}.
\]
Finally, maximizing the right-hand side over all feasible cardinalities
\(0\le k\le m\) gives
\[
A_{s,M}^{\mathrm{safe}}(m)
=
\max_{0\le k\le m}
\left\{
B_{s,M}^{\mathrm{fix}}(k)
+
\frac{k}{n+k}
\frac{L_\varphi(M)B_U}{\lambda}
\right\}.
\]
If \(m\) exceeds the maximum feasible cardinality, the upper limit is
replaced by that cardinality. The case \(k=0\) is immediate.
\end{proof}

\section{Additional Technical and Simulation Details}
\label{sec:supp-technical}

This section provides the multi-record first-order expansion, an explicit
Lipschitz constant for the capped score, numerical verification of the
theoretical calculations, and further implementation details.

\subsection{First-order approximation for several appended records}
\label{sec:supp-multiple-records}

Under the conditions of Proposition~\ref{prop:total-if}, let \(Q\) be a
probability measure supported on the candidate catalog, and consider
\[
P_t=(1-t)\Pn+tQ.
\]
Let \(\theta_Q(t)\) solve
\[
P_tU\{O;\theta_Q(t)\}=0,
\qquad
\theta_Q(0)=\widehat\theta.
\]
Applying the same product- and chain-rule argument as in
Section~\ref{sec:supp-proof-total-if} gives
\[
\left.
\frac{\mathrm d}{\mathrm dt}
P_t\varphi_{\theta_Q(t)}
\right|_{t=0}
=
Q\{\operatorname{IF}_{\mathrm{tot}}(O)\}.
\]

For a set \(I\) of \(k\ge1\) appended records, define
\[
Q_I
=
\frac{1}{k}\sum_{j\in I}\delta_{z_j},
\qquad
t_k
=
\frac{k}{n+k}.
\]
The empirical distribution after appending \(I\) is exactly
\[
P_{n,I}
=
(1-t_k)\Pn+t_kQ_I.
\]
Along the path associated with \(Q_I\),
\[
\left.
\frac{\mathrm d}{\mathrm dt}
P_t\varphi_{\theta_I(t)}
\right|_{t=0}
=
\frac{1}{k}
\sum_{j\in I}
\operatorname{IF}_{\mathrm{tot}}(z_j).
\]
Differentiability at \(t=0\) therefore gives
\[
P_t\varphi_{\theta_I(t)}
-
\psihat
=
\frac{t}{k}
\sum_{j\in I}
\operatorname{IF}_{\mathrm{tot}}(z_j)
+
o(t)
\qquad
\text{as }t\to0.
\]
Evaluating the leading term at \(t=t_k\) gives
\begin{equation}
\label{eq:supp-multiple-if}
\frac{1}{n+k}
\sum_{j\in I}
\operatorname{IF}_{\mathrm{tot}}(z_j).
\end{equation}
We use \eqref{eq:supp-multiple-if} as the first-order approximation to
the refitted movement produced by \(I\). It is a local approximation and
is not claimed to be exact when \(k/(n+k)\) is nonnegligible.

\subsection{An explicit Lipschitz constant for the capped score}
\label{sec:supp-lipschitz}

Let \(b(x)\) denote the common feature vector, including the intercept,
and suppose
\[
\|b(x)\|_2\le R_b
\]
over the reference-sample and catalog support. Let
\[
e_\beta(x)
=
\expit\{b(x)^\top\beta\},
\]
and let the outcome regressions be clipped linear predictors,
\[
\mu_{a,\gamma_a}(x)
=
\Pi_{[y_L,y_U]}
\{b(x)^\top\gamma_a\},
\]
where \(\Pi_{[y_L,y_U]}\) is projection onto
\([y_L,y_U]\). Assume that the observed and candidate outcomes also lie
in this interval, and define
\[
R_Y=y_U-y_L,
\quad
C_M=\max\{1,M-1\}.
\]

Under these conditions, one valid Lipschitz constant in
Assumption~\ref{ass:refit-stability} is
\begin{equation}
\label{eq:supp-lipschitz-constant}
L_\varphi(M)
=
R_b
\left\{
(M-1)^2R_Y^2+2C_M^2
\right\}^{1/2}.
\end{equation}

To verify \eqref{eq:supp-lipschitz-constant}, first note that
\[
e_\beta(x)^{-1}
=
1+\exp\{-b(x)^\top\beta\},
\quad
[1-e_\beta(x)]^{-1}
=
1+\exp\{b(x)^\top\beta\}.
\]
Where an inverse weight is below \(M\), the magnitude of its derivative
with respect to the linear predictor is at most \(M-1\); after the cap
binds, the capped weight is constant. Thus, each capped inverse weight is
globally Lipschitz in \(\beta\) with constant
\((M-1)R_b\).

Because \(Y\) and the clipped outcome predictions lie in
\([y_L,y_U]\), every residual has magnitude at most \(R_Y\). At points of differentiability, the norm of the propensity-block gradient is bounded by
\[
(M-1)R_bR_Y.
\]
For the treated outcome block, the coefficient multiplying
\(\partial_{\gamma_1}\mu_{1,\gamma_1}\) is
\(1-Aw_{1,M}(X)\), whose absolute value is at most \(C_M\). The analogous
coefficient for the control block is
\(-1+(1-A)w_{0,M}(X)\), also bounded by \(C_M\). Projection onto
\([y_L,y_U]\) is one-Lipschitz, so each outcome block contributes at most
\(C_MR_b\). Combining the three parameter blocks in Euclidean norm gives
\eqref{eq:supp-lipschitz-constant}. The bound remains valid at the
weight-cap and outcome-clipping kink points because the resulting
functions are globally Lipschitz.

For logistic and squared-error nuisance losses, the unpenalized Hessians
are positive semidefinite. Therefore, if the ridge matrix satisfies
\[
\Lambda\succeq\lambda I,
\]
the objective \(F_P\) is uniformly \(\lambda\)-strongly convex. If an
intercept or another coefficient is unpenalized, the ridge penalty alone
does not establish uniform strong convexity; a positive lower bound on
the empirical loss curvature in those directions must also be verified.

\subsection{Numerical verification of the theoretical calculations}
\label{sec:supp-validation}

For the exact fixed-pipeline result, the test suite enumerates all
feasible subsets of a five-candidate partition catalog and compares every
fixed-cardinality optimum with the greedy curve. It repeats this comparison
for a seven-candidate, three-level profile--contributor--site hierarchy and
verifies that a deliberately crossed hierarchy is rejected by the
nestedness check. All discrepancies are zero to numerical precision. A
separate test appends the first twelve accepted candidates and matches the
analytic fixed-pipeline curve to twelve decimal places.

For the local refit calculation, 337 one-record checks compare the
fixed-pipeline and total-influence predictions with complete nuisance
refitting. The fixed-pipeline prediction has root mean squared error
\(0.005628\), mean absolute error \(0.003736\), and correlation
\(0.99036\). Total influence reduces these values to \(0.003947\),
\(0.002624\), and \(0.99920\), respectively. In a held-out check using
the most adverse candidate, predicted movement is \(-0.017794\) and
realized movement is \(-0.016820\), giving a relative error of
\(5.79\%\).

Across 750 finite-cap audits, no realized movement exceeds
\(A_{s,M}^{\mathrm{safe}}(m)\). The bound is highly conservative: the
smallest median ratio of the envelope to realized movement is
\(10874.7\), occurring under poor overlap with \(M=3\), and the ratios
increase for looser caps. These calculations support using the envelope
as a finite-budget upper bound rather than as a prediction of realized
movement or a practical cap-selection score.

\subsection{Catalog construction and nuisance fitting}
\label{sec:supp-implementation}

The feature map is estimated once from the reference sample and includes
an intercept. Continuous covariates are standardized using the
reference-sample means and standard deviations, and the same transformation
is used for all candidate and attacked data sets. Logistic ridge regression
is fit by Newton iterations with backtracking, while the arm-specific ridge
outcome regressions are solved in closed form.

Fitted propensities are clipped to
\[
[10^{-7},1-10^{-7}]
\]
only for numerical stability. This numerical clipping is distinct from
the scientific inverse-weight cap \(M\). All outcome predictions used in the AIPW score, together with candidate
outcomes, are clipped to the fixed reference-sample outcome range.

For each observed covariate profile, the catalog contains four candidate
records: a control record with a lower control-arm residual, a control
record with an upper control-arm residual, a treated record with a lower
treated-arm residual, and a treated record with an upper treated-arm
residual. The lower and upper residuals are the first and ninety-ninth
percentiles of the corresponding reference-arm residual distribution.
The four alternatives for one profile have joint capacity one. Because
they have identical contributor, site, and network memberships, they may
be reduced, separately for each direction, to their highest-gain
alternative before the global capacity scan.

The repeated-record comparator appends unrestricted copies of the
candidate with the largest fixed-pipeline gain. Random plausible appends
use distinct observed profiles, draw treatment from the reference marginal
treatment probability, and draw a residual from the central \(98\%\) of
the empirical residual distribution in the selected treatment arm.
Synthetic random baselines average three draws for each reference sample
and budget.

Every realized attack refits both nuisance models. The fixed-pipeline and
total-influence curves are audit quantities, whereas the primary empirical
metric is realized directional movement after nuisance refitting.

For a capped estimate, the reference-sample standard error used in the
implementation is
\begin{equation}
\label{eq:supp-capped-se}
\widehat{\mathrm{se}}(\psihat_M)
=
\left\{
\frac{1}{n(n-1)}
\sum_{i=1}^n
\widehat r_M(O_i)^2
\right\}^{1/2}.
\end{equation}
Thus, the standard error is based on the empirical variance of the
centered capped scores. Capped total-influence values are used for
refit-aware candidate ranking, not for the standard-error calculation.

\subsection{Governance-capacity simulation}
\label{sec:supp-governance}

The governance experiment contains eight sites, 20 authenticated
contributors per site, and ten reference records per contributor. Each
candidate is assigned to a profile, contributor, site, and network root.
The four policies are:

\begin{enumerate}
\item capacity one per profile;

\item capacity one per profile and capacity 32 for the network;

\item capacity one per profile, capacity four per site, and capacity 32
for the network; and

\item capacity one per profile, capacity one per contributor, capacity
four per site, and capacity 32 for the network.
\end{enumerate}

All four policies use nested source groups. At each budget, candidate sets
are generated from both fixed-pipeline and total-influence rankings under
every policy. For a given target policy, the implementation refits every
generated set that is feasible under the target capacities and retains the
largest downward movement. This multi-start construction prevents a less
restrictive policy from appearing safer merely because it was evaluated
with a weaker attack initialization.

Random seeds are indexed separately by scenario, overlap level,
replication, and baseline draw. The command-line runner supports
component-wise execution through \texttt{--section}, allowing simulations,
cap comparisons, public-data analyses, and summaries to be reproduced
independently.

\section{Additional Numerical Results}
\label{sec:supp-numerical}

\begin{table}[htbp]
\centering
\scriptsize
\setlength{\tabcolsep}{3.5pt}
\caption{Governance-capacity experiment at a declared \(5\%\) budget;
medians over 40 simulated data networks. Fixed-pipeline movement is exact
under the specified hierarchy. Realized movement is the strongest
feasible multi-start refit stress test.}
\label{tab:supp-governance}
\begin{tabularx}{\textwidth}{>{\raggedright\arraybackslash}Xrrrr}
\toprule
Policy
& Fixed-pipeline
& Realized
& Top-site share
& Max.\ per contributor\\
\midrule
Profile only
& 4.472 & 0.605 & 0.369 & 5\\
Network cap 32
& 2.622 & 0.578 & 0.203 & 3\\
Four per site, cap 32
& 1.844 & 0.558 & 0.125 & 3\\
One per contributor, four per site, cap 32
& 1.788 & 0.552 & 0.125 & 1\\
\bottomrule
\end{tabularx}
\end{table}

\begin{table}[htbp]
\centering
\small
\caption{Selected inverse-weight-cap results at a \(2\%\) budget. Root
mean squared errors use 50 replications per overlap level.}
\label{tab:supp-caps}
\begin{tabular}{rrrrr}
\toprule
\(\alpha\)
& Cap
& Clean error
& Attacked error
& Median local movement\\
\midrule
0.75 & \(\infty\) & 0.0527 & 0.1844 & 0.2870\\
0.75 & 3          & 0.0520 & 0.1421 & 0.1515\\
1.50 & \(\infty\) & 0.0593 & 0.4513 & 1.4210\\
1.50 & 3          & 0.0581 & 0.1663 & 0.2251\\
2.50 & \(\infty\) & 0.0698 & 1.3689 & 8.7971\\
2.50 & 3          & 0.0618 & 0.2245 & 0.3258\\
\bottomrule
\end{tabular}
\end{table}

\begin{table}[htbp]
\centering
\small
\caption{Realized refit-aware movement in the public-data analyses. ACIC
values are medians across ten instances; smoking-cessation values are
directional movements from one reference analysis.}
\label{tab:supp-external}
\begin{tabular}{lrrr}
\toprule
Analysis and direction
& \(0.5\%\)
& \(1\%\)
& \(2\%\)\\
\midrule
ACIC benchmark, downward
& 0.263 & 0.415 & 0.647\\
Smoking cessation, downward
& 0.859 & 1.344 & 2.019\\
Smoking cessation, upward
& 0.945 & 1.477 & 2.222\\
Smoking cessation, cap five, downward
& 0.665 & 1.118 & 1.882\\
Smoking cessation, cap five, upward
& 0.684 & 1.211 & 1.883\\
\bottomrule
\end{tabular}
\end{table}